\newcommand{\Ptrans}{P}
\newcommand{\mR}{r}
\newcommand{\e}{\ensuremath{\mathbf{e}}}
\newcommand{\norm}[2]{{\left\Vert #1 \right\Vert}_{#2}}
\newcommand{\argmax}{\operatornamewithlimits{argmax}}
\newcommand{\argmin}{\operatornamewithlimits{argmin}}
\newcommand{\reg}{\text{Regret}}
\def\1{\mathbf{1}}
\def\P{\mathbb{P}}
\def\E{\mathbb{E}}
\def\M{\mathbf{M}}
\def\Me{\mathcal{M}}
\def\Xset{\ensuremath{\mathcal{X}}}
\def\Rset{\ensuremath{\mathbb{R}}}
\def\S{\ensuremath{\mathbb{S}}}
\def\Aset{\ensuremath{\mathcal{A}}}
\def\eqsp{\;}
\def\eqsp{\;}
\newtheorem{theorem}{Theorem}
\newtheorem{proposition}{Proposition}
\newtheorem{proof}{Proof}
\newcommand{\CPE}[3][]
{\ifthenelse{\equal{#1}{}}{\operatorname{E}\left[\left. #2 \, \right| #3 \right]}{\operatorname{E}^{#1}\left[\left. #2 \, \right | #3 \right]}}                
\newcommand{\CP}[3][]
{\ifthenelse{\equal{#1}{}}{\mathbb{P}\left[\left. #2 \, \right| #3 \right]}{\mathbb{P}^{#1}\left[\left. #2 \, \right | #3 \right]}}
\title{Optimism in Reinforcement Learning\\ and Kullback-Leibler Divergence}
\author{Sarah Filippi, Olivier Capp\'{e} and Aur\'{e}lien Garivier\\
LTCI, TELECOM ParisTech and CNRS\\
46 rue Barrault, 75013 Paris, France\\
Email: (filippi, cappe, garivier)@telecom-paristech.fr\thanks{This work has been partially supported by Orange Labs under contract n\textsuperscript{o}289365.}
}
\begin{document}

\maketitle

\begin{abstract}
  We consider model-based reinforcement learning in finite Markov Decision
  Processes (MDPs), focussing on so-called optimistic strategies. In MDPs,
  optimism can be implemented by carrying out extended value iterations under a
  constraint of consistency with the estimated model transition
  probabilities. The UCRL2 algorithm by Auer, Jaksch and Ortner (2009), which
  follows this strategy, has recently been shown to guarantee near-optimal
  regret bounds.
  In this paper, we strongly argue in favor of using the Kullback-Leibler (KL)
  divergence for this purpose. By studying the linear maximization problem
  under KL constraints, we provide an efficient algorithm, termed KL-UCRL, for
  solving KL-optimistic extended value iteration.
  Using recent deviation bounds on the KL divergence, we prove that KL-UCRL
  provides the same guarantees as UCRL2 in terms of regret. However, numerical
  experiments on classical benchmarks show a significantly improved behavior,
  particularly when the MDP has reduced connectivity. To support this
  observation, we provide elements of comparison between the two
  algorithms based on geometric considerations.\\
  Keywords : Reinforcement learning; Markov decision processes; Model-based approaches; Optimism; Kullback-Leibler divergence; Regret bounds
\end{abstract}

\section{Introduction}

In reinforcement learning, an agent interacts with an unknown environment,
aiming to maximize its long-term payoff~\cite{Sutton:Barto:98}. This interaction is
modelled by a Markov Decision Process (MDP) and it is assumed that the
agent does not know the parameters of the process and needs to learn
directly from observations. The agent thus faces a fundamental trade-off between
gathering experimental data about the consequences of the actions (exploration)
and acting consistently with past experience in order to maximize the rewards
(exploitation).\\
\hspace*{0.35cm}We consider in this article a MDP with finite state and action spaces for which
we propose a \textit{model-based} reinforcement learning algorithm, i.e., an
algorithm that maintains running estimates of the model parameters
(transitions probabilities and expected rewards) \cite{EvenDar:al:06,Kearns:Singh:02,Strehl:Littman:08,Tewari:Bartlett:08}.
A well-known approach to balance exploration and exploitation, followed for example by the well-know algorithm R-MAX \cite{Brafman:Tennenholtz:03}, is the so-called \textit{optimism in the face of uncertainty} principle. It was first proposed in the
multi-armed bandit context by \cite{Lai:Robbins:85}, and has been extended since then to several frameworks: instead of acting
optimally according to the estimated model, the agent follows the optimal
policy for a surrogate model, named \textit{optimistic model}, which is close enough to the former but leads to a higher long-term reward.
The performance of such an algorithm can be analyzed in terms of \textit{regret}, which consists 
in comparing the rewards collected by the algorithm with the rewards obtained when following an optimal policy. 
The study of the asymptotic regret due to \cite{Lai:Robbins:85} in the multi-armed context has been extended to MDPs by \cite{Burnetas:Katehakis:97}, proving that an optimistic algorithm can achieve logarithmic regret.
The subsequent works \cite{Auer:Ortner:07,Jaksch:al:10,Bartlett:Tewari:09}
introduced algorithms that guarantee non-asymptotic logarithmic regret in a large class of MDPs. In these latter works, the optimistic model is computed using the $L^1$ (or total variation) norm as a measure of proximity between the estimated and optimistic transition probabilities.

In addition to logarithmic regret bounds, the UCRL2 algorithm of \cite{Jaksch:al:10} is also attractive due to the simplicity of each $L^1$ extended value iteration step. In this case, optimism simply results in adding a bonus to the most promising transition (i.e., the transition that leads to the state with current highest value) while removing the corresponding probability mass from less promising transitions. This process is both elementary and easily interpretable, which is desirable in some applications.

However, the $L^1$ extended value iteration leads to undesirable pitfalls, which may compromise the practical performance of the algorithm. First, the optimistic model is not continuous with respect to the estimated parameters -- small changes in the estimates may result in very different optimistic models. More importantly, the $L^1$ optimistic model can become incompatible with the observations by assigning a probability of zero to a transition that has actually been observed. Moreover, in MDPs with reduced connectivity, $L^1$ optimism results in a persistent bonus for all transitions heading towards the most valuable state, even when significant evidence has been accumulated that these transitions are impossible.

In this paper, we propose an improved optimistic algorithm, called KL-UCRL, that avoids these pitfalls altogether. The key is the use of the Kullback-Leibler (KL) pseudo-distance instead of the $L^1$ metric, as in \cite{Burnetas:Katehakis:97}. Indeed, the smoothness of the KL metric largely alleviates the first issue. The second issue is completely avoided thanks to the strong relationship between the geometry of the probability simplex induced by the KL pseudo-metric and the theory of large deviations. For the third issue, we show that the KL-optimistic model results from a trade-off between the relative value of the most promising state and the statistical evidence accumulated so far regarding its reachability.

We provide an efficient procedure, based on one-dimensional line searches, to solve the linear maximization problem under KL constraints. As a consequence, the numerical complexity of the KL-UCRL algorithm is comparable to that of UCRL2.
Building on the analysis of \cite{Jaksch:al:10,Bartlett:Tewari:09,Auer:al:09}, we also obtain logarithmic regret bounds for the KL-UCRL algorithm. The proof of this result is based on novel concentration inequalities for the KL-divergence, which have interesting properties when compared with those traditionally used for the $L^1$ norm.
Although the obtained regret bounds are comparable to earlier results in term
of rate and dependence in the number of states and actions, we observed in
practice significant performance improvements. This observation is illustrated
using benchmark examples (the \emph{RiverSwim} and \emph{SixArms} environments of
\cite{Strehl:Littman:08}) and through a thorough discussion of the geometric
properties of KL neighborhoods.

The paper is organized as follows. The model and a brief survey of the value iteration algorithm in undiscounted MDPs are presented in
Section~\ref{sec:modele}. Section~\ref{sec:algo} and~\ref{sec:regret} are devoted, respectively, to the description
and the analysis of the KL-UCRL algorithm. Section \ref{sec:simu}
contains numerical experiments and Section~\ref{sec:KLversusL1} concludes the paper by discussing
the advantages of using KL rather than $L^1$ confidence neighborhoods.

\section{Markov Decision Process}\label{sec:modele}

Consider a Markov decision process (MDP) $\M=(\Xset,\Aset,\Ptrans, \mR)$ with finite state space  $\Xset$, and action space $\Aset$. 
Let $X_t\in\Xset$ and $A_t\in\Aset$ denote respectively the state of the system and the action chosen by the agent at time $t$.
The probability to jump from state $X_t$ to state $X_{t+1}$ is denoted by $\Ptrans(X_{t+1};X_t,A_t)$. 
Besides, the agent receives at time $t$ a random reward $R_t\in[0,1]$ with mean $\mR(X_t,A_t)$. 
The aim of the agent is to choose the sequence of actions so as to maximize the cumulated reward.
His choices are summarized in a \textit{stationary policy} $\pi:\Xset\rightarrow\Aset$.

In this paper, we consider \textit{communicating} MDPs, i.e., MDPs such that for any pair of states $x,x'$, there exists policies under which $x'$ can be reached from $x$ with positive probability. For those MDPs, it is known that the \textit{average reward} following a stationary policy $\pi$, denoted by $\rho^{\pi}(\M)$ and defined as 

\[
  \rho^\pi(\M) = \lim\limits_{n\to \infty}\frac{1}{n} \E_{\M, \pi}\left(\sum_{t=0}^{n}R_t\right) \eqsp,
\]
is state-independent~\cite{Puterman:94}.  
Let $\pi^*(\M):\Xset\rightarrow\Aset$ and $\rho^*(\M)$ denote respectively the optimal policy and the optimal average reward: $
\rho^*(\M)=\sup_\pi  \rho^\pi(\M)= \rho^{\pi^*(\M)}(\M)\eqsp.
$
The notations $\rho^*(\M)$ and $\pi^*(\M)$ are meant to highlight the fact that
both the optimal average reward and the optimal policy depend on the model $\M$. 
The optimal average reward satisfies the so-called \textit{Bellman optimality equation}: for all $x\in\Xset$,
\begin{multline*}
h^*(\M,x) + \rho^*(\M) = \\
  \max_{a\in\Aset}\left( \mR(x,a)+\sum_{x'\in\Xset}\Ptrans(x';x,a)h^*(\M,x')\right) \eqsp ,
\end{multline*}
where the $|\Xset|$-dimensional vector $h^*(\M)$ is called a \emph{bias} vector. Note that it is only defined up to an additive constant. 
For a fixed MDP $\M$, the optimal policy $\pi^*(\M)$ can be derived by solving the optimality equation and by defining, for all $x\in\Xset$,
$$\pi^*(\M,x)\in\argmax_{a\in\Aset}\left(\mR(x,a)+\sum_{x'\in\Xset}\Ptrans(x';x,a) h^*(\M,x)\right).$$
In practice, the optimal average reward and the optimal policy may be computed, for instance, using the value iteration algorithm~\cite{Puterman:94}.


\section{The KL-UCRL algorithm} \label{sec:algo}
In this paper, we focus on the reinforcement learning problem in which the agent does not know the model $\M$ beforehand, i.e. the transition probabilities and the distribution of the rewards are unknown. More specifically, we consider model-based reinforcement learning algorithms which estimate the model through observations and act accordingly. Denote by $\hat{\Ptrans}_t(x';x,a)$ the estimate at time $t$ of the transition probability from state $x$ to state $x'$ conditionally to the action $a$, and, by $\hat{r}_t(x,a)$ the mean reward received in state $x$ when action $a$ has been chosen. We have:
\begin{align}
\hat{\Ptrans}_t(x';x,a) &= \frac{N_t(x,a,x')}{\max(N_t(x,a),1)}\nonumber\\
\hat{r}_t(x,a) &= \frac{\sum_{k=0}^{t-1}R_k\1_{\{X_k=x,A_k=a\}}}{\max(N_t(x,a),1)}\eqsp,\label{eq:est} 
\end{align}
where $N_t(x,a,x')=\sum_{k=0}^{t-1}\1_{\{X_k=x,A_k=a,X_{k+1}=x'\}}$ is the number of visits, up to time $t$, to the state $x$ followed by a visit to $x'$ when the action $a$ has been chosen, and similarly, $N_t(x,a)=\sum_{k=0}^{t-1}\1_{\{X_k=x,A_k=a\}}$.
The optimal policy in the estimated model $\widehat\M_t=(\Xset,\Aset,\hat\Ptrans_t,\hat{r}_t)$ may be misleading due to estimation errors: pure exploitation policies are commonly known to fail with positive probability. To avoid this problem, \textit{optimistic model-based approaches} consider a set $\Me_t$ of potential MDPs including $\widehat\M_t$ and choose the MDP from this set that leads to the largest average reward. In the following, the set $\Me_t$ is defined as follows:
 \begin{align*}
\Me_t&=\{\M=(\Xset,\Aset,\Ptrans, \mR) : \forall x\in\Xset,\forall a\in\Aset,\\
&\quad\quad\quad\quad|\hat\mR_t(x,a)-\mR(x,a)|\leq \epsilon_R(x,a,t) \\
&\quad\quad\quad\text{and } d(\hat\Ptrans_t(.;x,a),\Ptrans(.;x,a))\leq \epsilon_P(x,a,t)\}\eqsp,
 \end{align*}
where $d$ measures the difference between the transition probabilities.
The \textit{radius of the neighborhoods} $\epsilon_R(x,a,t)$ and $\epsilon_P(x,a,t)$ around, respectively, the estimated reward $\hat\mR_t(x,a)$ and the estimated transition probabilities $\hat\Ptrans_t(.;x,a)$, decrease with $N_t(x,a)$.

In contrast to UCRL2, which uses the $L^1$-distance for $d$, we propose to rely on the Kullback-Leibler divergence, as in the seminal article~\cite{Burnetas:Katehakis:97}; however, contrary to the approach of \cite{Burnetas:Katehakis:97}, no prior knowledge on the state structure of the MDP is needed. Recall that the Kullback-Leibler divergence is defined for all $n$-dimensional probability vectors $p$ and $q$ by $KL(p,q)=\sum_{i=1}^n p_i\log\frac{p_i}{q_i}$ (with the convention that $0\log 0=0$).
In the sequel, we will show that this choice dramatically alters the behavior of the algorithm and leads to significantly better performance, while causing a limited increase of complexity; in Section~\ref{sec:KLversusL1}, the advantages of using a KL-divergence instead of the $L^1$-norm are illustrated and argumented.

\subsection{The KL-UCRL algorithm}
The KL-UCRL algorithm, described below, is a variant of the efficient model-based algorithm UCRL2, introduced by \cite{Auer:al:09} and extended to more general MDPs by \cite{Bartlett:Tewari:09}. The key step of the algorithm, the search for the optimistic model (Step 8), is detailed below as Algorithm \ref{algo:MaxKL}.

\begin{algorithm}[hbtp]
\begin{algorithmic}[1]
\caption{KL-UCRL}
\label{algo:klucrl}
\State Initialization: $j=0$, $t_0 = 0$; $\forall a\in\Aset,\forall x\in\Xset,n_0(x,a)=0$, $N_0(x,a)=0$; initial policy $\pi_0$.
\For{all $t\geq 1$}
\State Observe $X_t$
\If{$n_j(X_t,\pi_j(X_t))\geq\max(N_{t_j}(X_t,\pi_j(X_t)),1)$}
\State \textit{Begin a new episode:} $j=j+1$, $t_j=t$,
\State Reinitialize: $\forall a\in\Aset,\forall x\in\Xset\eqsp,\eqsp n_j(x,a)=0$ 
\State Estimate $\hat\Ptrans_t$ and $\hat{r}_t$ according to \eqref{eq:est}
\State Find the optimistic model $\M_j\in\Me_t$ and the related policy $\pi_{j}$ solving equation~\eqref{eq:extOpt} and using Algorithm~\ref{algo:MaxKL}
\EndIf
\State Choose action $A_t = \pi_j(X_t)$
\State Receive reward $R_t$
\State Update the count within the current episode: $$n_j(X_t,A_t)=n_j(X_t,A_t)+1$$
\State Update the global count: $$N_t(X_t,A_t)=N_{t-1}(X_t,A_t)+1$$
\EndFor

\end{algorithmic}
\end{algorithm}

The KL-UCRL algorithm proceeds in episodes. Let $t_j$ be the starting time of episode $j$; the length of the $j$-th episode depends on the number of visits $N_{t_j}(x,a)$ to each state-action pair $(x,a)$ before $t_j$ compared to the number of visits $n_j(x,a)$ to the same pair during the $j$-th episode. More precisely, an episode ends as soon as $n_j(x,a)\geq N_{t_j}(x,a)$ for some state-action pair $(x,a)$. The policy $\pi_j$, followed during the $j$-th episode, is an optimal policy for
the optimistic MDP $\M_j=(\Xset,\Aset,\Ptrans_j,\mR_j)\in\Me_{t_j}$,
 which is computed by solving the \textit{extended optimality equations}: for all $x\in\Xset$
\begin{equation}
h^*(x) + \rho^* = \max_{\Ptrans,\mR}\max_{a\in\Aset}\left( \mR(x,a)+\sum_{x'\in\Xset}\Ptrans(x';x,a)h^*(x')\right)\label{eq:extOpt}
\end{equation}
where the maximum is taken over all $P, r$ such that
\begin{align*}
& \forall x, \forall a,\quad KL(\hat\Ptrans_{t_j}(.;x,a),\Ptrans(.;x,a))\leq \frac{C_P}{N_{t_j}(x,a)} \eqsp , \\ 
&\forall x, \forall a,\quad |\hat\mR_{t_j}(x,a)-\mR(x,a)|\leq \frac{C_R}{\sqrt{N_{t_j}(x,a)}} \eqsp,
\end{align*}
where $C_P$ and $C_R$ are constants which control the size of the confidence balls. 
The transition matrix $\Ptrans_j$ and the mean reward $\mR_j$ of the optimistic MDP $\M_j$ maximize those equations.
The \textit{extended value iteration} algorithm may be used to approximately solve the fixed point equation~\eqref{eq:extOpt} \cite{Puterman:94,Auer:al:09}.

\subsection{Maximization of a linear function on a KL-ball}\label{sec:maxKL}

At each step of the extended value iteration algorithm, the maximization problem~\eqref{eq:extOpt} has to be solved. For every state $x$ and action $a$, the maximization of $r(x,a)$ under the constraint that $|\hat\mR_{t_j}(x,a)-\mR(x,a)|\leq C_R/\sqrt{N_{t_j}(x,a)}$ is obviously solved taking $\mR(x,a)=\hat{r}_{t_j}(x,a)+C_R/\sqrt{N_{t_j}(x,a)}$, so that the main difficulty lies in maximizing the dot product between the probability vector $q=\Ptrans(.;x,a)$ and the 
\textit{value vector} $V=h^*$ over a KL-ball around the fixed probability vector $p=\hat\Ptrans_{t_j}(.;x,a)$:
\begin{equation}
\max_{q\in\S^{|\Xset|}}V'q\quad\text{s.t.}\quad KL(p,q)\leq\epsilon\eqsp,\label{eq:max_kl}
\end{equation}
where $V'$ denotes the transpose of $V$ and $\S^n$ the set of $n$-dimensional probability vectors. The radius of the neighborhood $\epsilon=C_P/N_{t_j}(x,a)$ controls the size of the confidence ball. This convex maximization problem is studied in Appendix~\ref{ap:solmax}, leading to the efficient algorithm presented below. 
Detailed analysis of the Lagrangian of~\eqref{eq:max_kl} shows that the solution of the 
maximization problem 
essentially relies on finding roots of the function $f$ (that depends on the parameter $V$), defined as follows: for all $\nu\geq \max_{i\in\bar{Z}}V_i$, with $\bar{Z}=\{i:p_i>0\}$,
\begin{equation}f(\nu) = \sum_{i\in\bar{Z}}p_i\log(\nu-V_i)  + \log\left(\sum_{i\in\bar{Z}}\frac{p_i}{\nu-V_i}\right)\eqsp.\label{eq:def_f}\end{equation}
In the special case where the most promising state $i_M$ has never been reached from the current state-action pair (i.e. $p_{i_M}=0$), the algorithm makes a trade-off between the relative value of the most promising state $V_{i_M}$ and the statistical evidence accumulated so far regarding its reachability.
\begin{algorithm}[hbtp]
\begin{algorithmic}[1]
\caption{Function MaxKL}
\label{algo:MaxKL}
\Require  A value function $V$, a probability vector $p$, a constant~$\epsilon$
\Ensure A probability vector $q$ that maximizes~\eqref{eq:max_kl}
\State Let $Z=\{i: p_i=0\}$ and $\bar{Z}=\{i: p_i>0\}$.\\ Let $I^* = Z\cap\argmax_{i}V_i$
\If{$I^*\neq\emptyset$ and there exists $i\in I^*$ such that $f(V_i)<\epsilon$}
\State Let $\nu=V_i$ and $r=1-\exp(f(\nu)-\epsilon)$.
\State For all $i\in I^*$, assign values of $q_i$ such that $$\sum_{i\in I^*}q_i=r\eqsp.$$
\State For all $i\in Z/I^*$, let $q_i=0$.
\Else
\State For all $i\in Z$, let $q_i=0$. Let $r=0$.
\State Find $\nu$ such that $f(\nu) = \epsilon$ using Newton's method.
\EndIf
\State For all $i\in\bar{Z}$, let  $q_i = \frac{(1-r)\tilde{q}_i}{\sum_{i\in\bar{Z}}\tilde{q}_i}$ where $\tilde{q}_i = \frac{p_i}{\nu-V_i}$.
\end{algorithmic}
\end{algorithm}


In practice, $f$ being a convex positive decreasing function (see Appendix~\ref{ap:proof_pratic}), Newton's method can be applied to find $\nu$ such that $f(\nu) = \epsilon$ (in Step 9 of the algorithm), so that numerically solving \eqref{eq:max_kl} is a matter of a few iterations.
Appendix~\ref{ap:proof_pratic} contains a discussion of the initialization of Newton's algorithm based on asymptotic arguments.


\section{Regret bounds}\label{sec:regret}
\subsection{Theoretical results}

To analyze the performance of KL-UCRL, we compare the rewards accumulated by the algorithm to the rewards that would be obtained, on average, by an agent playing an optimal policy. The \textit{regret} of the algorithm after $T$ steps is defined as in \cite{Jaksch:al:10}:
$$\reg_T = \sum_{t=1}^T \left(\rho^*(\M)-R_t\right)\eqsp.$$

We adapt the regret bound analysis of the UCRL2 algorithm to the use of KL-neighborhoods, and obtain similar theorems.
Let $$D(\M) = \max_{x,x'}\min_\pi\E_{\M, \pi}(\tau(x,x'))\eqsp,$$ where $\tau(x,x')$ is the hitting time of $x'$, starting from state $x$. The $D(\M)$ constant will appear in the regret bounds. For all communicating MDPs $\M$, $D(\M)$ is finite. Theorem~\ref{theo:regret} establishes an upper bound on the regret of the KL-UCRL algorithm with $C_P$ and $C_R$ defined as
$$C_P=|\Xset|\left(B+\log\left(B+\frac{1}{\log(T)}\right)\left[1 + \frac{1}{B+\frac{1}{\log(T)}}\right]\right)$$
 where
$B=\log\left(\frac{2e|\Xset|^2|\Aset|\log(T)}{\delta}\right)$ 
and 
$$C_R= \sqrt{\frac{\log\left(4|\Xset||\Aset|\log(T)/\delta\right)}{1.99}}\eqsp.$$
\begin{theorem}\label{theo:regret}
With probability $1-\delta$, it holds that for $T>5$, the regret of KL-UCRL is bounded by
$$\reg_T\leq CD(\M)|\Xset|\sqrt{|\Aset|T\log(\log(T)/\delta)}\eqsp,$$
for a constant $C\leq 24$ that does not dependent on the model.
\end{theorem}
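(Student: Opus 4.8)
The plan is to follow the structure of the UCRL2 regret analysis of \cite{Jaksch:al:10}, replacing the $L^1$ confidence balls by the KL balls $KL(\hat\Ptrans_{t_j}(\cdot;x,a),\Ptrans(\cdot;x,a))\le C_P/N_{t_j}(x,a)$ and inserting, at exactly one place, a Pinsker-type inequality in lieu of the triangle-inequality bounds that are available for free in the $L^1$ case. The first step is to establish the \emph{good event}: with probability at least $1-\delta$, the true MDP $\M$ belongs to $\Me_{t_j}$ for every episode $j$. This is where the values of $C_P$ and $C_R$ are fixed. For the transitions one needs a finite-sample deviation inequality for $KL(\hat\Ptrans,\Ptrans)$ — a nonasymptotic companion of Sanov's theorem — of the form $\P\bigl(KL(\hat\Ptrans(\cdot;x,a),\Ptrans(\cdot;x,a))\ge u/N\bigr)\le g(N,|\Xset|)\,e^{-u}$, combined with a union bound over the $|\Xset||\Aset|$ pairs and a peeling argument over the possible values $N_{t_j}(x,a)\in\{1,\dots,T\}$ (only $\log T$ of them after geometric peeling); the stated $C_P=|\Xset|\bigl(B+\log(B+1/\log T)[1+1/(B+1/\log T)]\bigr)$ with $B=\log(2e|\Xset|^2|\Aset|\log T/\delta)$ is precisely what such a computation yields. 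The reward part is identical with a Hoeffding-type bound $\P(|\hat\mR-\mR|>\epsilon)\le 2e^{-2N\epsilon^2}$, leading after the same union/peeling step to $C_R=\sqrt{\log(4|\Xset||\Aset|\log T/\delta)/1.99}$.

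Conditioning on the good event, I would decompose $\reg_T=\sum_j\Delta_j$ with $\Delta_j=\sum_{t=t_j}^{t_{j+1}-1}(\rho^*(\M)-R_t)$, and invoke \emph{optimism}: since $\M\in\Me_{t_j}$, the average reward $\tilde\rho_j$ returned by extended value iteration for the optimistic model $\M_j$ satisfies $\tilde\rho_j\ge\rho^*(\M)-1/\sqrt{t_j}$, the slack accounting for the finite number of value-iteration sweeps. Replacing $\rho^*(\M)$ by $\tilde\rho_j$ and using the (approximate) extended Bellman equation for $\M_j$, one gets for each visited pair $(X_t,A_t)$
\begin{multline*}
\tilde\rho_j-\mR(X_t,A_t)\le \bigl(\tilde\Ptrans_j(\cdot;X_t,A_t)-\Ptrans(\cdot;X_t,A_t)\bigr)'\tilde h_j \\
+\bigl(\Ptrans(\cdot;X_t,A_t)'\tilde h_j-\tilde h_j(X_{t+1})\bigr)+\bigl(\tilde h_j(X_{t+1})-\tilde h_j(X_t)\bigr) +\bigl(\tilde\mR_j-\mR\bigr)(X_t,A_t)+\tfrac{1}{\sqrt{t_j}}\eqsp.
\end{multline*}

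Here $\tilde h_j(X_{t+1})-\tilde h_j(X_t)$ telescopes inside the episode; $\Ptrans(\cdot;X_t,A_t)'\tilde h_j-\tilde h_j(X_{t+1})$ is a martingale difference bounded by $\mathrm{sp}(\tilde h_j)$, whose sum over $t\le T$ is controlled by Azuma--Hoeffding — the span bound $\mathrm{sp}(\tilde h_j)\le D(\M)$ carries over verbatim from \cite{Jaksch:al:10}, since it only uses that the extended MDP contains the communicating MDP $\M$ and hence has diameter at most $D(\M)$; the reward term is at most $2C_R/\sqrt{N_{t_j}(X_t,A_t)}$; and the crucial transition term is bounded, by Pinsker's inequality $\|p-q\|_1\le\sqrt{2\,KL(p,q)}$ applied twice (to the pairs $\hat\Ptrans$ vs $\Ptrans$ and $\hat\Ptrans$ vs $\tilde\Ptrans_j$), by $\|\tilde\Ptrans_j(\cdot;X_t,A_t)-\Ptrans(\cdot;X_t,A_t)\|_1\,\mathrm{sp}(\tilde h_j)\le 2\sqrt{2C_P/N_{t_j}(X_t,A_t)}\;D(\M)$. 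Summing within each episode and over episodes, the indicator counts regroup as $\sum_{t\in\text{ep }j}\1_{\{X_t=x,A_t=a\}}=n_j(x,a)$; the counting lemmas of \cite{Jaksch:al:10} then give $\sum_j n_j(x,a)/\sqrt{N_{t_j}(x,a)}\le(\sqrt2+1)\sqrt{N_T(x,a)}$ and $\sum_{x,a}\sqrt{N_T(x,a)}\le\sqrt{|\Xset||\Aset|T}$ by Cauchy--Schwarz, while the number of episodes is $O\bigl(|\Xset||\Aset|\log(T/|\Xset||\Aset|)\bigr)$, keeping the telescoping and $1/\sqrt{t_j}$ contributions lower order. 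Collecting constants, the dominant contribution is $D(\M)\sqrt{C_P}\sqrt{|\Xset||\Aset|T}$ plus a $D(\M)\sqrt{|\Xset||\Aset|T\log(1/\delta)}$ martingale term; substituting $C_P=\Theta(|\Xset|\log(\log T/\delta))$ gives $\reg_T\le C\,D(\M)|\Xset|\sqrt{|\Aset|T\log(\log T/\delta)}$, and a careful accounting of all numerical factors (Pinsker's $\sqrt2$, the $(\sqrt2+1)$ from counting, the $2$'s from the doubled confidence radii, the Azuma constant) yields $C\le 24$.

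The main obstacle is the very first step: producing a self-normalized, time-uniform deviation inequality for $KL(\hat\Ptrans,\Ptrans)$ with an \emph{explicit} and only logarithmic dependence on $T$, and — crucially — a dependence on the dimension $|\Xset|$ no worse than linear, so that after peeling over $N_{t_j}(x,a)\le T$ and a union over $(x,a)$ one recovers exactly the stated $C_P$ rather than something inflated by extra dimension factors. Unlike the Weissman $L^1$ bound used in UCRL2, the relevant KL concentration result is comparatively delicate, and the bookkeeping of its constant through the peeling is the technical heart of the proof; everything downstream is a faithful adaptation of the UCRL2 argument with Pinsker's inequality substituted at the single place where transition errors meet the bias vector.
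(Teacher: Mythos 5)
Your proposal follows essentially the same route as the paper's proof: the good event is established via a time-uniform KL deviation inequality (which the paper imports from Garivier--Leonardi/Garivier--Moulines rather than re-deriving) plus a union bound giving exactly the stated $C_P$ and $C_R$; the regret is then decomposed episode-wise into a reward term, a transition term handled by Pinsker's inequality together with $\norm{h_k}{\infty}\leq D(\M)$, and a martingale term controlled by Azuma--Hoeffding; and the counting lemmas of Auer et al.\ close the argument. The only discrepancies are cosmetic bookkeeping (e.g.\ the paper's good event holds with probability $1-2\delta$), so no substantive gap remains beyond the KL concentration result, which both you and the paper treat as an external ingredient.
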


It is also possible to prove a logarithmic upper bound for the expected regret. This bound, presented in Theorem~\ref{theo:regretLog}, depends on the model through the constant $\Delta(\M)$ defined as $\Delta(\M) = \rho^*(\M)-\max_{\pi, \rho^\pi(\M)<\rho^*(\M)} \rho^\pi(\M)$. $\Delta(\M)$ quantifies the margin between optimal and suboptimal policies.
\begin{theorem}\label{theo:regretLog}
For $T>5$, the expected regret of KL-UCRL is bounded by
$$\E(\reg_T)\leq CD^2(\M)\frac{|\Xset|^2|\Aset|\log(T)}{\Delta(\M)}+C(\M)\eqsp,$$
where $C\leq 400$ is a constant independent of the model, and $C(\M)$ is a constant which depends on the model (see \cite{Jaksch:al:10}).
\end{theorem}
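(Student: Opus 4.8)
I would follow the route of \cite{Jaksch:al:10} (their analysis of logarithmic regret and the corresponding expected-regret corollary), re-using the episode-wise decomposition already produced in the proof of Theorem~\ref{theo:regret} and combining an \emph{upper} bound on the regret, coming from optimism, with a \emph{lower} bound, coming from the suboptimality gap $\Delta(\M)$. Let $\mathcal{B}$ be the event that the true model satisfies $\M\in\Me_{t_j}$ for every episode $j$ occurring before time $T$; the KL concentration inequalities underlying Theorem~\ref{theo:regret}, with the stated choice of $C_P$ and $C_R$, make $\P(\mathcal{B}^c)$ at most a fixed negative power of $T$ (times $\delta$). On $\mathcal{B}$, optimism gives $\rho^*(\M)\le\rho^*(\M_j)$, and the regret of episode $j$, $\Delta_j\eqdef\sum_{t=t_j}^{t_{j+1}-1}\big(\rho^*(\M)-R_t\big)$, decomposes into: a telescoping term bounded by the span of the optimistic bias vector, which is at most $D(\M)$; transition-deviation terms $\sum_{t}\big(\Ptrans_j(\cdot;X_t,A_t)-\Ptrans(\cdot;X_t,A_t)\big)'h^*(\M_j)$, controlled via the KL radius $C_P/N_{t_j}(X_t,A_t)$ and the span $D(\M)$; reward-deviation terms bounded by $C_R/\sqrt{N_{t_j}(X_t,A_t)}$; and a martingale-difference term $\sum_t\big(\mR(X_t,A_t)-R_t\big)$ of zero mean. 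Summing the deviation terms over any sub-collection of episodes of total length $L$, by Cauchy--Schwarz and the counting lemma $\sum_j n_j(x,a)/\sqrt{\max(N_{t_j}(x,a),1)}\le(\sqrt{2}+1)\sqrt{N_T(x,a)}$, produces a bound of order $D(\M)|\Xset|\sqrt{|\Aset|L\log(\log(T)/\delta)}$, whereas the telescoping term contributes at most $D(\M)$ per episode and the number of episodes is $O(|\Xset||\Aset|\log T)$.

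Next I would classify episode $j$ as \emph{good} when $\pi_j$ is an optimal policy of $\M$, so that $\rho^{\pi_j}(\M)=\rho^*(\M)$, and \emph{bad} otherwise; by definition of $\Delta(\M)$, a bad episode has $\rho^{\pi_j}(\M)\le\rho^*(\M)-\Delta(\M)$. Write $T_{\mathrm{bad}}$ for the total length of the bad episodes. For a good episode, the telescoping term already accounts for all the non-stochastic regret, so, taking expectations (the martingale term vanishing), the good episodes together contribute at most $c\,D(\M)|\Xset||\Aset|\log T$. For the bad episodes, applying the bound above with $L=T_{\mathrm{bad}}$ and Jensen's inequality yields
\[
\E\Big[\sum_{j\text{ bad}}\Delta_j\,\1_{\mathcal{B}}\Big]\;\le\;c\,D(\M)|\Xset|\sqrt{|\Aset|\log(\log(T)/\delta)}\,\sqrt{\E[T_{\mathrm{bad}}]}\;+\;c\,D(\M)|\Xset||\Aset|\log T\eqsp.
\]

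Conversely, since the policy is frozen throughout an episode, an optional-stopping argument applied to the Poisson (bias) equation of $\pi_j$ on the true MDP gives $\E[\Delta_j\mid\mathcal{F}_{t_j}]\ge\Delta(\M)\,(t_{j+1}-t_j)-D(\M)$ for a bad episode, the transient correction being controlled — as in \cite{Jaksch:al:10} — by the span of the bias and hence by $D(\M)$; summing over bad episodes, $\E[\reg_T\,\1_{\mathcal{B}}]\ge\Delta(\M)\,\E[T_{\mathrm{bad}}]-c\,D(\M)|\Xset||\Aset|\log T$. Putting this together with the previous display gives a quadratic inequality in $\sqrt{\E[T_{\mathrm{bad}}]}$ of the form $\Delta(\M)\,\E[T_{\mathrm{bad}}]\le c\,D(\M)|\Xset|\sqrt{|\Aset|\log(\log(T)/\delta)}\,\sqrt{\E[T_{\mathrm{bad}}]}+c\,D(\M)|\Xset||\Aset|\log T$, whose resolution yields $\E[T_{\mathrm{bad}}]\le c\,D^2(\M)|\Xset|^2|\Aset|\log(\log(T)/\delta)/\Delta^2(\M)+(\text{lower order})$; substituting back into the upper bound gives $\E[\reg_T\,\1_{\mathcal{B}}]\le c\,D^2(\M)|\Xset|^2|\Aset|\log(\log(T)/\delta)/\Delta(\M)+C(\M)$. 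Finally, $\E[\reg_T\,\1_{\mathcal{B}^c}]\le T\,\P(\mathcal{B}^c)$ is bounded by a model-dependent constant once $\delta$ is taken polynomially small in $T$ (as in \cite{Jaksch:al:10}), which simultaneously turns $\log(\log(T)/\delta)$ into $O(\log T)$; collecting all the lower-order and episode-count contributions into $C(\M)$ gives the claimed bound.

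The step I expect to be the main obstacle is the ``frozen-policy'' lower bound $\E[\Delta_j\mid\mathcal{F}_{t_j}]\ge\Delta(\M)(t_{j+1}-t_j)-D(\M)$: making it rigorous requires relating the expected reward gathered by $\pi_j$ on $\M$ over a (random, stopping-time) episode, started from an arbitrary state, to $\rho^{\pi_j}(\M)$ with an additive error governed by the \emph{diameter} $D(\M)$ rather than by a potentially much larger mixing time of the chain induced by $\pi_j$ — exactly the subtle point handled in \cite{Jaksch:al:10}. A secondary, more clerical difficulty is to restrict the concentration and summation machinery of Theorem~\ref{theo:regret} to the sub-collection of bad episodes while keeping all constants uniform and while checking that the KL-specific estimates controlling the transition-deviation terms are unaffected by this restriction.
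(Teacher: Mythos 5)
Your proposal is correct and follows essentially the same route as the paper, which simply invokes the argument of Theorem~4 of Jaksch et al.\ (splitting episodes by whether $\pi_j$ is optimal for $\M$, upper-bounding the regret of bad episodes by the $\sqrt{T_{\mathrm{bad}}}$-type bound from Theorem~\ref{theo:regret}, lower-bounding it by $\Delta(\M)\E[T_{\mathrm{bad}}]$ minus lower-order terms, solving the resulting quadratic inequality, and taking $\delta$ polynomially small in $T$). The subtleties you flag — the optional-stopping/bias argument for the frozen-policy lower bound and the restriction of the concentration machinery to bad episodes — are exactly the points delegated to the cited reference.
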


\subsection{Elements of proof} 
The proof of Theorem~\ref{theo:regret} is inspired from \cite{Jaksch:al:10,Bartlett:Tewari:09}. Due to the lack of space, we only provide the main steps of the proof. 
First, the following proposition enables us to ensure that, with high probability, the true model $\M=(\Xset,\Aset,\Ptrans,\mR)$ belongs to the set of models $\Me_t$ at each time step.
\begin{proposition}\label{prop:model}
For every horizon $T\geq 1$ and for $\delta>0$,
$\P\left(\forall t\leq T\eqsp,\eqsp\M\in\Me_t\right)\geq 1-2\delta$.
\end{proposition}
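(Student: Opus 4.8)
The plan is to bound separately the probabilities that each of the two families of constraints defining $\Me_t$ is violated for some $t\le T$ and some $(x,a)$, and to conclude by a union bound: the reward constraints $|\hat r_t(x,a)-\mR(x,a)|\le C_R/\sqrt{N_t(x,a)}$ and the transition constraints $KL(\hat\Ptrans_t(\cdot;x,a),\Ptrans(\cdot;x,a))\le C_P/N_t(x,a)$ will each be shown to fail with probability at most $\delta$, which is enough.

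The first step is to trade the time index for an i.i.d.\ sample count. Fix a pair $(x,a)$ and list the successive states reached from it as $Y_1,Y_2,\dots$; since the outcome of a transition does not depend on the policy, the $Y_m$ are i.i.d.\ with law $\Ptrans(\cdot;x,a)$, and for $n=N_t(x,a)\ge 1$ the estimate $\hat\Ptrans_t(\cdot;x,a)$ coincides with the empirical distribution $\widehat\Ptrans^{(n)}$ of $Y_1,\dots,Y_n$ (and $\hat r_t(x,a)$ is likewise an empirical mean of $n$ i.i.d.\ variables in $[0,1]$). Since $N_t(x,a)\in\{0,1,\dots,T\}$ always, the event $\{\exists t\le T: KL(\hat\Ptrans_t(\cdot;x,a),\Ptrans(\cdot;x,a))>C_P/N_t(x,a)\}$ is contained in $\bigcup_{n=1}^{T}\{n\,KL(\widehat\Ptrans^{(n)},\Ptrans(\cdot;x,a))>C_P\}$ (the case $N_t(x,a)=0$ being immediate), so it suffices to control $\sup_{1\le n\le T} n\,KL(\widehat\Ptrans^{(n)},\Ptrans(\cdot;x,a))$, and similarly the self-normalized reward deviations. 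This reduction sidesteps the fact that $N_t(x,a)$ is a random, policy-dependent count, at the price of a union over $n$.

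The key ingredient, established in the appendix, is a \emph{time-uniform} deviation inequality for the Kullback--Leibler divergence between the empirical and the true distribution on the finite alphabet $\Xset$: a bound of the form $\P(\exists\, n\le T: n\,KL(\widehat\Ptrans^{(n)},\Ptrans(\cdot;x,a))\ge|\Xset|\,\gamma)\le h(\gamma,T)$, where $h(\gamma,T)$ is essentially $e^{-\gamma}$ times a factor polynomial in $\gamma$ and logarithmic in $T$; it is obtained by combining a Sanov / method-of-types (or direct moment-generating-function) estimate with a peeling argument over geometrically spaced values of $n$. This is precisely what replaces, and improves on, the Weissman $L^1$ bound used in UCRL2, and it is what makes the horizon enter only through $\log\log T$. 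Taking $C_P$ as given above makes $|\Xset||\Aset|\cdot h(C_P/|\Xset|,T)\le\delta$, which settles the transition half after a union bound over the $|\Xset||\Aset|$ pairs. For the rewards, since $R_t\in[0,1]$ one uses the analogous time-uniform Hoeffding-type bound --- equivalently, a time-uniform bound on $n\,KL_{\mathrm{Ber}}(\hat r_n,\mR)$ combined with Pinsker's inequality $2(\hat r_n-\mR)^2\le KL_{\mathrm{Ber}}(\hat r_n,\mR)$ --- the constant $1.99$ (slightly below $2$) absorbing the peeling correction; the stated value of $C_R$ makes this failure probability at most $\delta$. A union bound over the two halves yields probability at least $1-2\delta$.

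The main obstacle is the concentration inequality itself. A naive method-of-types bound for $n\,KL(\widehat\Ptrans^{(n)},\Ptrans(\cdot;x,a))$ carries a prefactor like $(n+1)^{|\Xset|-1}$, which --- together with the union over $n\le T$ --- would reintroduce a genuine $\log T$ dependence, both in the failure probability and as a spurious $\log T$ inside $C_P$. Obtaining the stated tight form of $C_P$, in particular the clean $|\Xset|\,(B+\log(\cdots)(1+\cdots))$ expression, requires carefully balancing the polynomial-in-$\gamma$ prefactor of the deviation bound against the depth of the peeling, and checking that the supremum over $n$ is controlled uniformly; this bookkeeping, together with the proof of the deviation bound, is where essentially all the difficulty lies --- the rest being the routine union bound of the previous paragraph and the elementary observation that transition outcomes are policy-independent.
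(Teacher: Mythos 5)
Your proposal is correct and follows essentially the same route as the paper: the paper's proof consists precisely of invoking two time-uniform, self-normalized concentration inequalities --- one for $KL(\hat\Ptrans_t(\cdot;x,a),\Ptrans(\cdot;x,a))$ (cited from Garivier--Leonardi) and one for the reward deviations (cited from Garivier--Moulines) --- choosing $C_P$ and $C_R$ so that each fails with probability at most $\delta/(|\Xset||\Aset|)$, and concluding by a union bound over state--action pairs and the two constraint families. The paper cites these deviation bounds rather than proving them, so your sketch of their derivation (peeling combined with a method-of-types or moment-generating-function estimate) is added detail rather than a different approach.
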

The proof relies on the two following concentration inequalities due to \cite{Garivier:Leonardi:10,Garivier:Moulines:08}: for all $x\in\Xset$, $a\in\Aset$, any $C_P>0$, and $C_R>0$, it holds that
\begin{align}
&\P\left(\forall t\leq T,\eqsp KL(\hat\Ptrans_t(.;x,a),\Ptrans(.;x,a))>\frac{C_P}{N_t(x,a)}\right)\nonumber\\
&\quad\quad\leq 2e(C_P\log(T)+|\Xset|)e^{-\frac{C_P}{|\Xset|}}=1-\frac{\delta}{|\Xset||\Aset|}\label{eq:conc_ineqP}\\
&\P\left(\forall t\leq T,\quad|\hat{r}_t(x,a)-\mR(x,a)| \leq \frac{C_R}{\sqrt{N_t(x,a)}}\right)\nonumber\\
&\quad\quad\leq 4\log(T)e^{-1.99C_R}=1-\frac{\delta}{|\Xset||\Aset|}\eqsp.\nonumber
\end{align}
Then, summing over all state-action pairs, Proposition~\ref{prop:model} follows.

Using Hoeffding's inequality, with high probability, the regret at time $T$ can be written as the sum of a regret in each of the $m(T)$ episodes plus an additional term $C_e(T,\delta))=\sqrt{T\log(1/\delta)/2}$:
\begin{align*}
\reg_T&\leq \sum_{(x,a)}N_T(x,a)(\rho^*(\M)-\mR(x,a))+C_e(T,\delta)\\
&\leq\sum_{k=1}^{m(T)}\sum_{(x,a)}n_k(x,a)(\rho^*(\M)-\mR(x,a))+C_e(T,\delta)
\end{align*}
Let $P_k$ and $\pi_k$ denote, respectively, the transition probability matrix of the optimistic model and the optimal policy in the $k$-th episode ($1\leq k\leq m(T)$). It is easy to show that (see~\cite{Jaksch:al:10} for details), with probability $1-\delta$,
\begin{align*}
\reg_T\leq& \sum_{k=1}^{m(T)}\sum_{x\in\Xset}n_k(x,\pi_k(x))\bigl[\bigr.\\
&\quad+(\mR_k(x,\pi_k(x))-\mR(x,\pi_k(x)))\\
&\quad\left(\Ptrans_k(.;x,\pi_k(x))-\Ptrans(.;x,\pi_k(x))\right)'h_k\\
&\quad+\bigl. (\Ptrans(.;x,\pi_k(x))-\e_x)'h_k\bigr]\\
&\quad+C_e(T,\delta)\eqsp,
\end{align*}
where $h_k$ is a bias vector, $\e_x(y)=1$ if $x=y$ and $\e_x(y)=0$ otherwise. 
We now bound each of the three terms in the previous summation. 
Denote by $n_k^{\pi_k}$ the row vector such that $n_k^{\pi_k}(x)=n_k(x,\pi_k(x))$ and by $\mR_k^{\pi_k}$ (resp. $\mR^{\pi_k}$) the column vector such that $\mR_k^{\pi_k}(x)=\mR_k(x, \pi_k(x))$ (resp. $\mR^{\pi_k}(x)=\mR(x, \pi_k(x))$). Similarly $P_k^{\pi_k}$ (resp. $P^{\pi_k}$) is the transition matrix if the policy $\pi_k$ is followed under the optimistic model $\M_k$ (resp. the true model $\M$). 
If the true model $\M\in\Me_{t_k}$, we have for all $x\in\Xset$, for all $a\in\Aset$,
\begin{equation}
 n_k^{\pi_k}(\mR_k^{\pi_k}-\mR^{\pi_k})\leq 2\sum_{(x,a)}n_k(x,a)\sqrt{\frac{C_R}{N_t(x,a)}}\label{eq:borne1}
\end{equation}
Using Pinsker's inequality, and the fact that $\norm{h_k}{\infty}\leq D$ \cite{Jaksch:al:10},
\begin{align}
&n_k^{\pi_k}(\Ptrans_k^{\pi_k}-\Ptrans^{\pi_k})h_k\nonumber\\
&\quad\leq\sum_{(x,a)}n_k(x,a) \norm{\Ptrans_k(.;x,a)-\Ptrans(.;x,a)}{1}\norm{h_k}{\infty}\nonumber\\
 & \quad\leq2D\sqrt{2}\sum_{(x,a)}n_k(x,a)\sqrt{\frac{C_P}{N_{t_k}(x,a)}}\eqsp.\label{eq:borne2}
\end{align}
The third term $n_k^{\pi_k}(\Ptrans^{\pi_k}-I)h_k$ may be written as follows:
\begin{align*}
 n_k^{\pi_k}(\Ptrans^{\pi_k}-I)h_{k}&\leq\sum_{t=t_k}^{t_{k+1}-1}(\Ptrans(.; X_t,A_t)-\e_{X_{t+1}})h_{k} \\
&\quad+h_{k}(X_{t+1})-h_{k}(X_t)\eqsp,
\end{align*}
where $\e_{x}$ is the all 0's vector with a $1$ only on the $x$-th component. For all $t\in[t_k,t_{k+1}-1]$, note that $\xi_t=(\Ptrans(.; X_t,A_t)-\e_{X_{t+1}})h_{k}$ is a martingale difference upper-bounded by $D$. Applying the Azuma-Hoeffding inequality, we obtain that
\begin{align}
 \sum_{k=1}^{m(T)}n_k^{\pi_k}(\Ptrans^{\pi_k}-I)h_k& = \sum_{t=1}^T \xi_t + m(T)D \nonumber\\
& \leq D\sqrt{\frac{T\log(1/\delta)}{2}}+ m(T)D\label{eq:borne3}
\end{align}
with probability $1-\delta$. 
In addition, Auer and al \cite{Auer:al:09} proved that $$\sum_{k=1}^{m(T)}\sum_{x,a}\frac{n_k(x,a)}{\sqrt{N_{t_k}(x,a)}}\leq (\sqrt{2}+1)\sqrt{|\Xset||\Aset|T}$$ and $$m(T)\leq |\Xset||\Aset|\log_2\left(\frac{8T}{|\Xset||\Aset|}\right)\eqsp.$$
Combining all the terms completes the proof of Theorem~\ref{theo:regret}. The proof of Theorem~\ref{theo:regretLog} follows from Theorem~\ref{theo:regret} using the same arguments as in the proof of Theorem~4 in~\cite{Jaksch:al:10}.

\section{Numerical experiments}\label{sec:simu}
To compare the behavior of algorithms KL-UCRL and UCRL2, we consider the benchmark environments \textit{RiverSwim} and \textit{SixArms} proposed by \cite{Strehl:Littman:08} as well as a collection of randomly generated sparse environments. 
\begin{figure}[hbtp]
  \centering
  \includegraphics[width=0.9\textwidth]{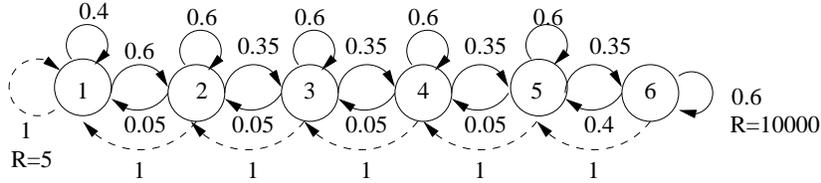}
  \caption{\textit{RiverSwim} Transition Model: the continuous (resp. dotted) arrows represent the transitions if action $1$ (resp. $2$) has been chosen.}
  \label{fig:riverSwim}
\end{figure}
The \textit{RiverSwim} environment consists of six states. The agent starts from the left side of the row and, in each state, can either swim left or right. Swimming to the right (against the current of the river) is successful with probability $0.35$; it leaves the agent in the same state with a high probability equal to $0.6$, and leads him to the left with probability $0.05$ (see Figure~\ref{fig:riverSwim}). On the contrary, swimming to the left (with the current) is always successful. The agent receives a small reward when he reaches the leftmost state, and a much larger reward when reaching the rightmost state -- the other states offer no reward. This MDP requires efficient exploration procedures, since the agent, having no prior idea of the rewards, has to reach the right side to discover which is the most valuable state-action pair.
\begin{figure}[hbtp]
  \centering
  \input{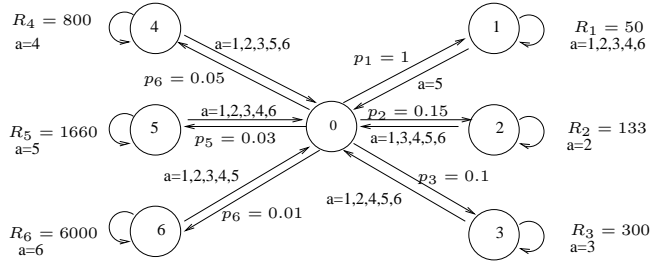}
  \caption{\textit{SixArms} Transition Model}
  \label{fig:sixArms}
\end{figure}
The \textit{SixArms} environment consists of seven states, one of which (state $0$) is the initial state. From the initial state, the agent may choose one among six actions: the action $a\in\{1,\dots,6\}$ leads to the state $x=a$ with probability $p_a$ (see Figure~\ref{fig:sixArms}) and let the agent in the initial state with probability $1-p_a$. 
From all the other states, some actions deterministically lead the agent to the initial state while the others leave it in the current state. Staying in a state $x\in\{1,\dots,6\}$, the agent receives a reward equal to $R_x$ (see Figure~\ref{fig:sixArms}), otherwise, no reward is received.

We compare the performance of the KL-UCRL algorithm to UCRL2 using $20$ Monte-Carlo replications. For both algorithms, the constants $C_P$ and $C_R$ are settled to ensure that the upper bounds of the regret of Theorem~\ref{theo:regret} and Theorem~2 in~\cite{Jaksch:al:10} hold with probability $0.95$. 
In the \textit{SixArms} environment, the received rewards being deterministic, we slightly modify both algorithms so that the agent knows them beforehand. 
We observe in Figure~\ref{fig:regret} and~\ref{fig:regret:6arms} that the KL-UCRL algorithm accomplishes a smaller average regret than the UCRL2 algorithm in those benchmark environments. In both environments, it is crucial for the agent to learn that there is no action leading from some states to the most promising one: for example, in the \textit{RiverSwim} environment, between one of the first four states and the sixth state.

In addition to those benchmark environments, a generator of sparse environments has been used to create $10$-states and $5$-actions environments with random rewards in $[0,1]$. In these random environments, each state is connected with, on average, five other states (with transition probabilities drawn from a Dirichlet distribution). We reproduced the same experiments as in the previous environments and display the average regret in Figure~\ref{fig:regret:random}.
\begin{figure}[hbtp]
  \centering
\includegraphics[width=0.8\textwidth]{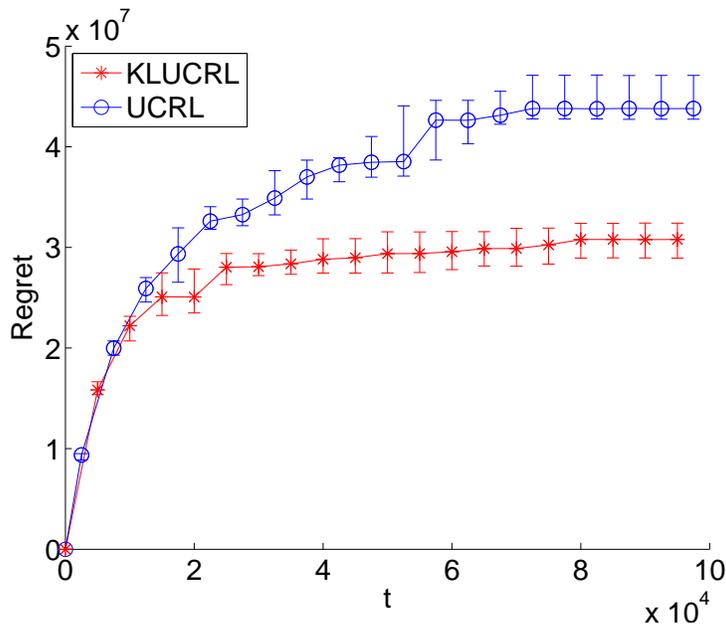}
  \caption{Comparison of the regret of the UCRL2 and KL-UCRL algorithms in the RiverSwim environment.}
  \label{fig:regret}
\end{figure}
\begin{figure}[hbtp]
  \centering
\includegraphics[width=0.8\textwidth]{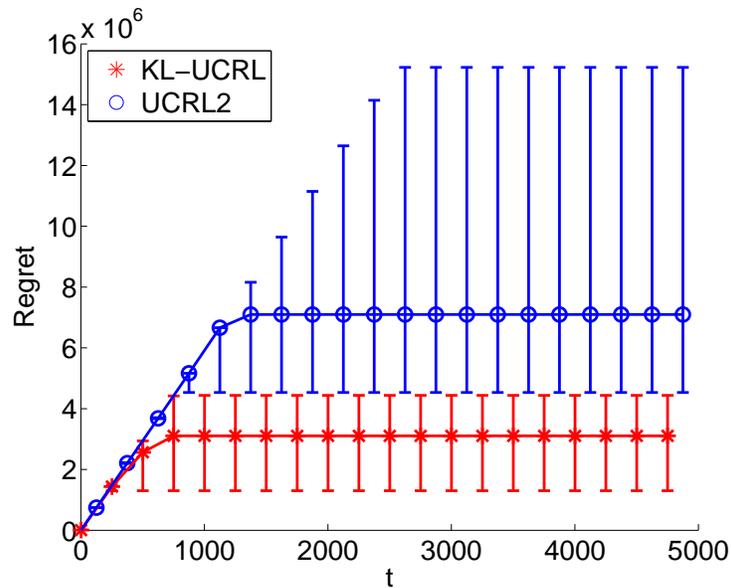}
  \caption{Comparison of the regret of the UCRL2 and KL-UCRL algorithms in the SixArms environment.}
  \label{fig:regret:6arms}
\end{figure}

\begin{figure}[hbtp]
  \centering
\includegraphics[width=0.8\textwidth]{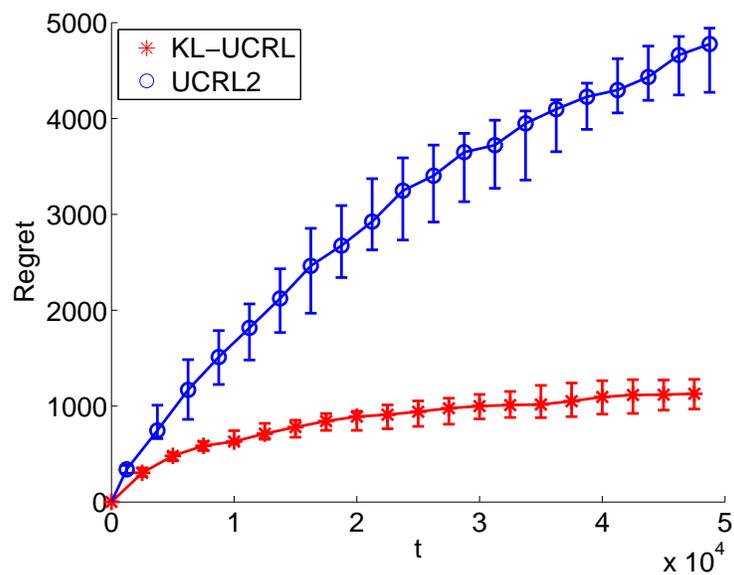}
  \caption{Comparison of the regret of the UCRL2 and KL-UCRL algorithms in randomly generated sparse environments.}
  \label{fig:regret:random}
\end{figure}

\section{Discussion}\label{sec:KLversusL1}
 In this section, we expose the advantages of using a confidence ball based on the Kullback-Leibler divergence rather than an $L^1$-ball, as proposed for instance in \cite{Jaksch:al:10,Tewari:Bartlett:08}, in the computation of the optimistic policy. This discussion aims at explaining and interpreting the difference of performance that can be observed in simulations.
In KL-UCRL, optimism reduces to maximizing the linear function $V'q$ over a KL-ball (see~\eqref{eq:max_kl}), whereas the other algorithms make use of an $L^1$-ball:
\begin{equation}
\max_{q\in\S^{|\Xset|}}V'q\quad\text{s.t.}\quad \norm{p-q}{1}\leq\epsilon'\eqsp.\label{eq:max_1} 
\end{equation}

\subsection*{Continuity}
Consider an estimated transition probability vector $p$, and denote by $q^{KL}$ (resp. $q^1$) the probability vector which maximizes Equation~\eqref{eq:max_kl} (resp. Equation~\eqref{eq:max_1}).
It is easily seen that $q^{KL}$ and $q^{1}$ lie respectively on the border of the convex set $\{q\in\S^{|X|} : KL(p,q)\leq\epsilon\}$ and at one of the vertices of the polytope $\{q\in\S^{|X|} : \norm{p-q}{1}\leq\epsilon'\}$. A first noteworthy difference between those neighborhoods is that, due to the smoothness of the KL-neighborhood, $q^{KL}$ is continuous with respect to the vector $V$, which is not the case for $q_{1}$.

To illustrate this, Figure~\ref{fig:simplexe_var} displays $L^1$- and KL-balls around $3$-dimensional probability vectors. The set of $3$-dimensional probability vectors is represented by a triangle whose vertices are the vectors $(1,0,0)'$, $(0,1,0)'$ and $(0,0,1)'$, the probability vector $p$ by a white star, and the vectors $q^{KL}$ and $q^1$ by a white point. The arrow represents the direction of $V$'s projection on the simplex and indicates the gradient of the linear function to maximize. The maximizer $q^1$ can vary significantly for small changes of the value function, while $q^{KL}$ varies continuously.
\begin{figure}[hbt] \centering
 \includegraphics[width=0.44\textwidth]{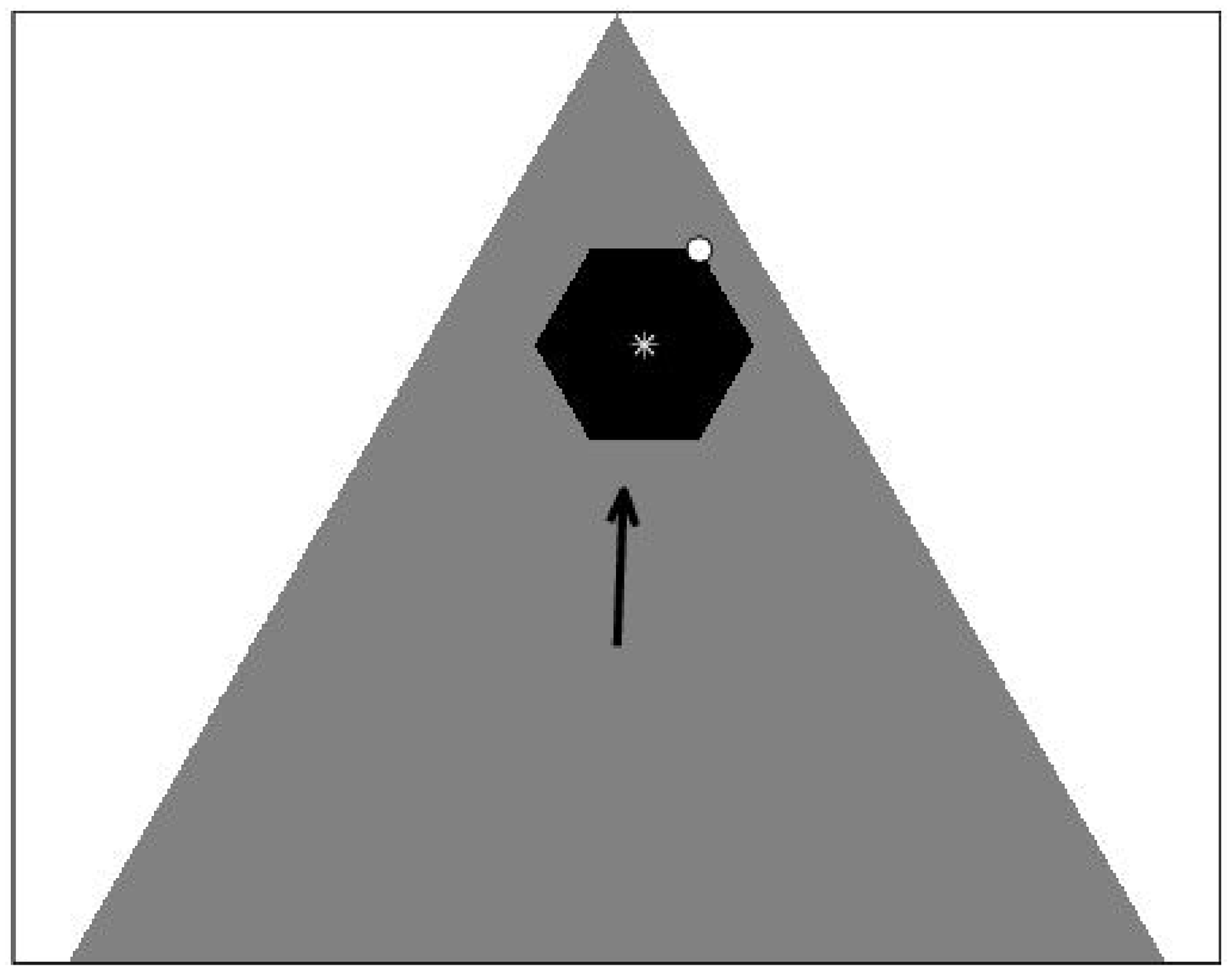}
\includegraphics[width=0.44\textwidth]{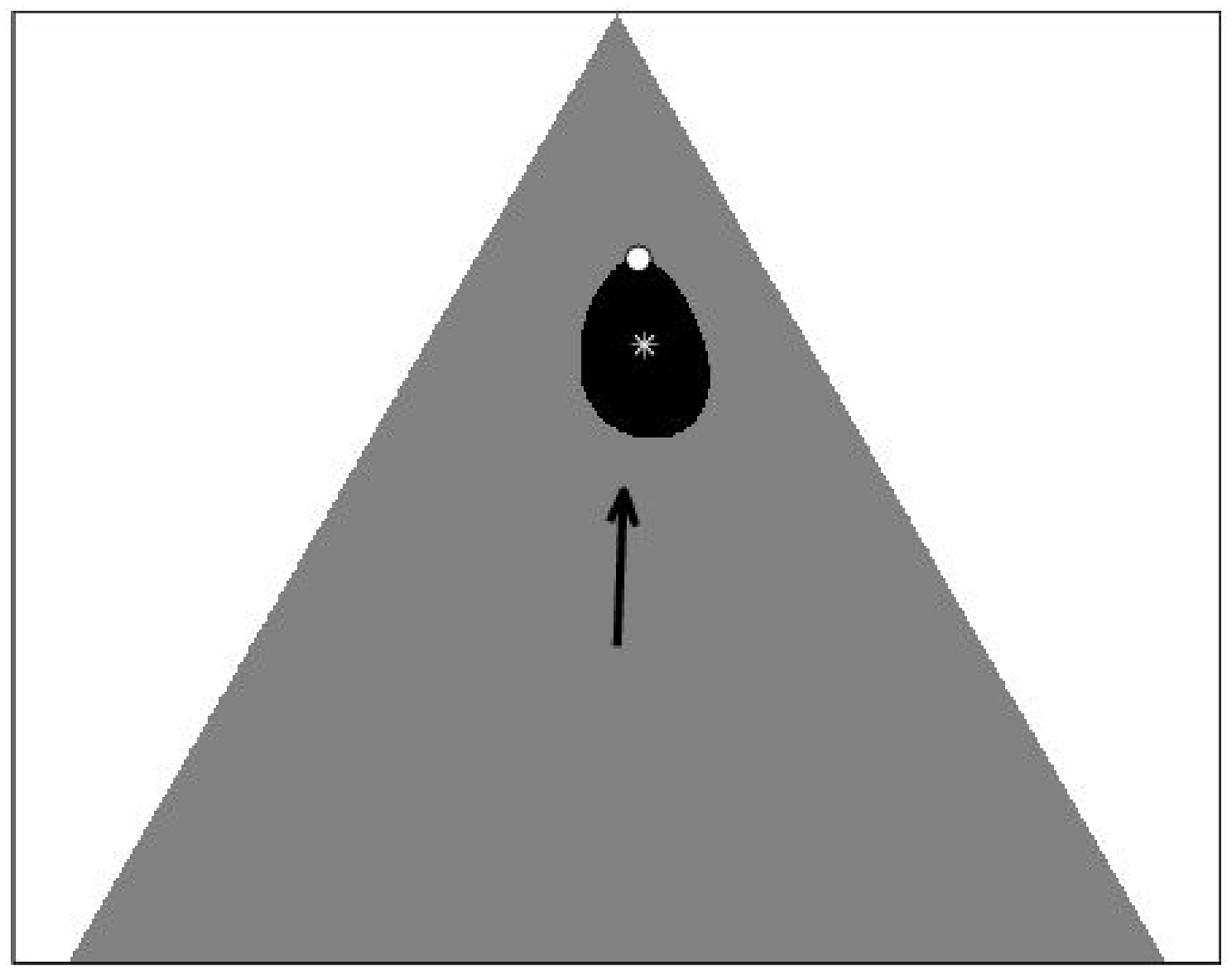} \\
\includegraphics[width=0.44\textwidth]{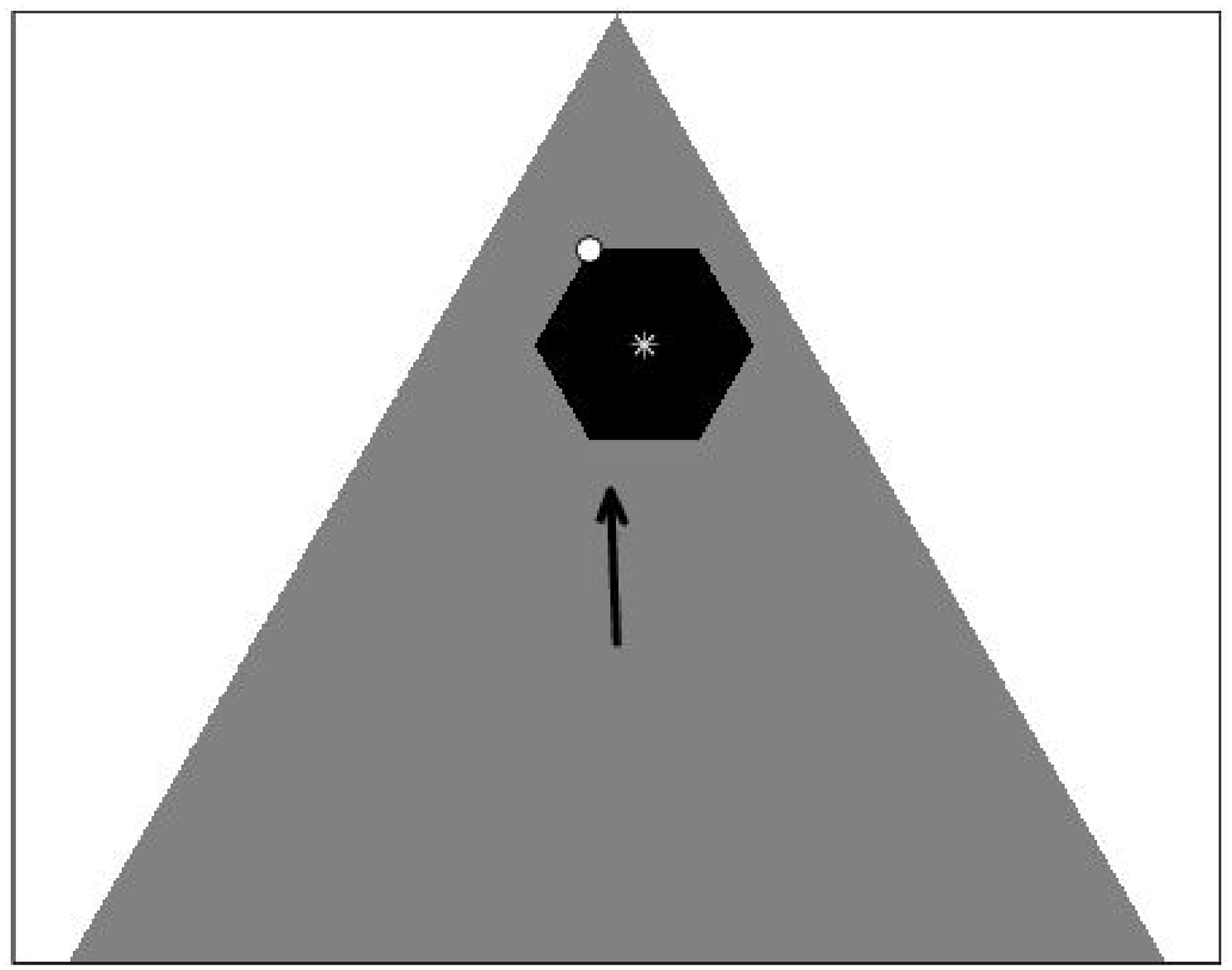}
\includegraphics[width=0.44\textwidth]{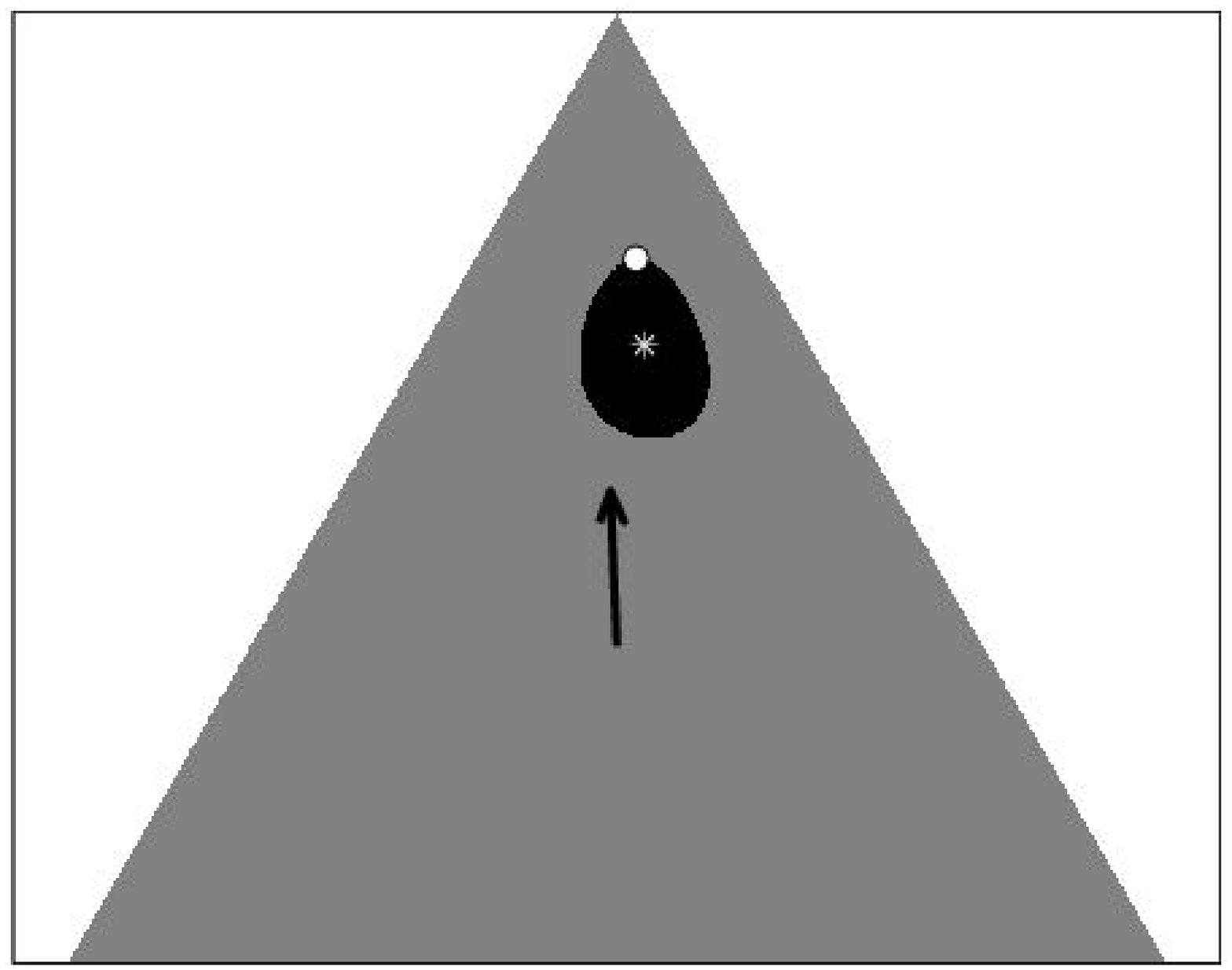}
\caption{The $L^1$-neighborhood  $\{q\in\S^3 : \norm{p-q}{1}\leq 0.2\}$ (left) and KL-neighborhood  $\{q\in\S^3 : KL(p,q)\leq 0.02\}$ (right) around the probability vector $p=(0.15,0.2,0.65)'$ (white star). The white points are the maximizers of equations~\eqref{eq:max_kl} and~\eqref{eq:max_1} with $V=(0,0.05,1)'$ (up) and $V=(0,-0.05,1)'$ (down).}
\label{fig:simplexe_var}
\end{figure}
\vspace*{-0.5cm}
\subsection*{Unlikely transitions}
Denote $i_m=\argmin_j V_j$ and $i_M=\argmax_j V_j$. As underlined by~\cite{Jaksch:al:10}, $q^1_{i_m}=\max(p_{i_m}-\epsilon'/2,0)$ and $q^1_{i_M}=\min(p^1_{i_M}+\epsilon'/2,1)$. This has two consequences:
\begin{enumerate}
 \item if $p$ is such that $0<p_{i_m}<\epsilon'/2$, then the vector $q^1_{i_m}=0$; so the optimistic model may assign a probability equal to zero to a transition that has actually been observed, which makes it hardly compatible with the optimism principle. Indeed, an optimistic MDP should not forbid transitions that really exists, even if they lead to states with small values; 
 \item if $p$ is such that $p_{i_M}=0$, then $q^1_{i_M}$ never equals $0$; therefore, an optimistic algorithm that uses $L^1$-balls will always assign positive probability to transitions to $i_M$ even if this transition is impossible under the true MDP and if much evidence has been accumulated against the existence of such a transition.
Thus, the exploration bonus of the optimistic procedure is wasted, whereas it could be used more efficiently to favor some other transitions.
\end{enumerate}
This explains a large part of the experimental advantage of KL-UCRL observed in the simulations.
Indeed, $q^{KL}$ always assigns strictly positive probability to observed transitions, and eventually renounces unobserved transitions even if the target states have a potentially large value. Algorithm~\ref{algo:MaxKL} works as follows: for all $i$ such that $p_i\neq 0$, $q_i\neq 0$; for all $i$ such that $p_i=0$, $q_i=0$ except if $p_{i_M}=0$  and if $f(V_{i_M})<\epsilon$, in which case $q_{i_M}=1-\exp(f(V_{i_M})-\epsilon)$. But this is no longer the case when $\epsilon$ becomes small enough, that is, when sufficiently many observations are available. 
We illustrate those two important differences in Figure~\ref{fig:simplexe_0}, by representing the $L^1$ and KL neighborhoods together with the maximizers $q^{KL}$ and $q^{1}$, first if $p_{i_m}$ is positive by very small, and second if $p_{i_M}$ is equal to $0$. Figure~\ref{fig:evolution} also illustrates the latter case, by representing the evolution of the probability vector $q$ that maximizes both \eqref{eq:max_1} and \eqref{eq:max_kl} for an example with $p=(0.3,0.7,0)'$, $V=(1,2,3)'$ and $\epsilon$ decreasing from $1/2$ to $1/500$. 

\vspace{-0.2cm}
\begin{figure}[hbtp] \centering
 \includegraphics[width=0.44\textwidth]{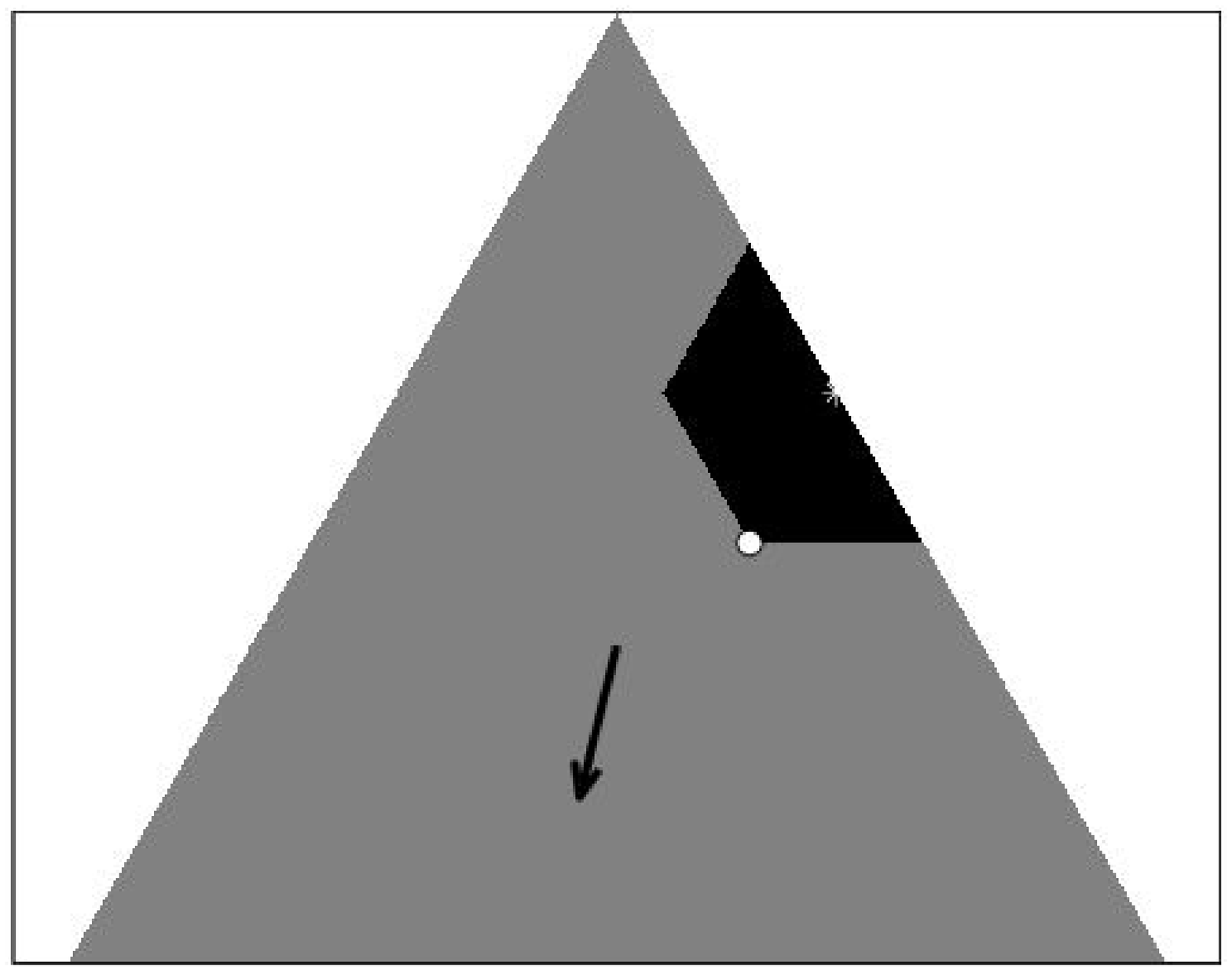}
\includegraphics[width=0.44\textwidth]{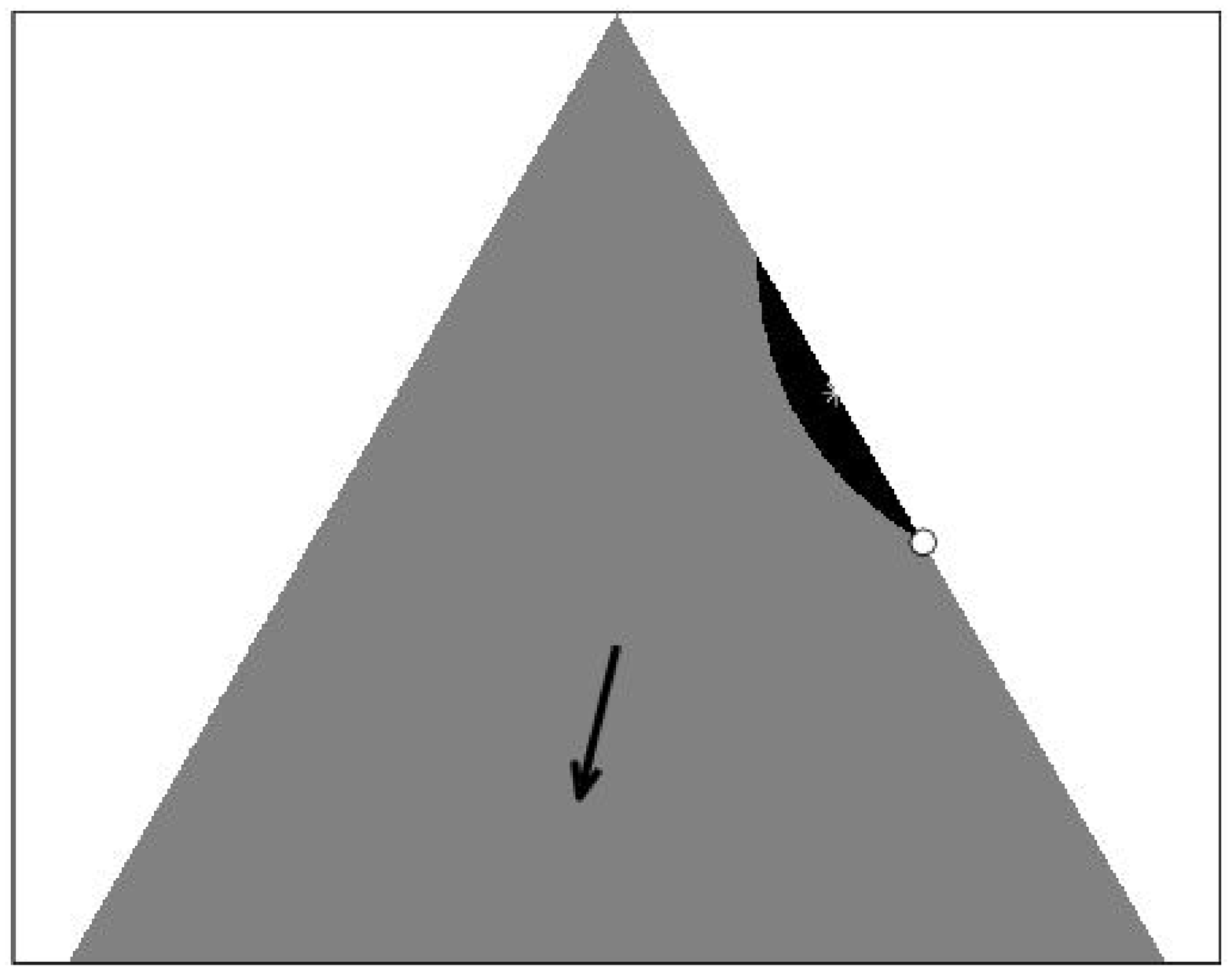} \\
 \includegraphics[width=0.44\textwidth]{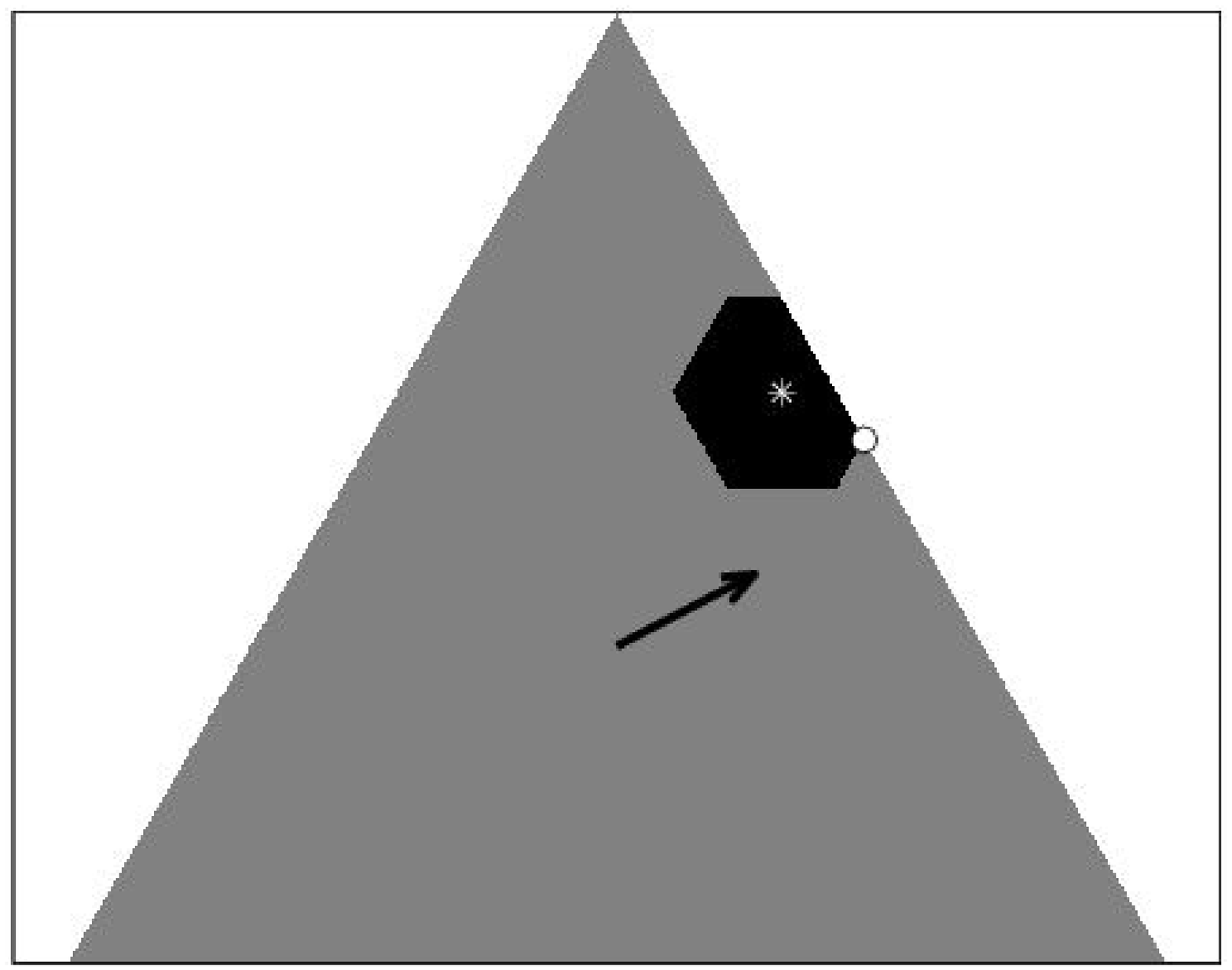}
\includegraphics[width=0.44\textwidth]{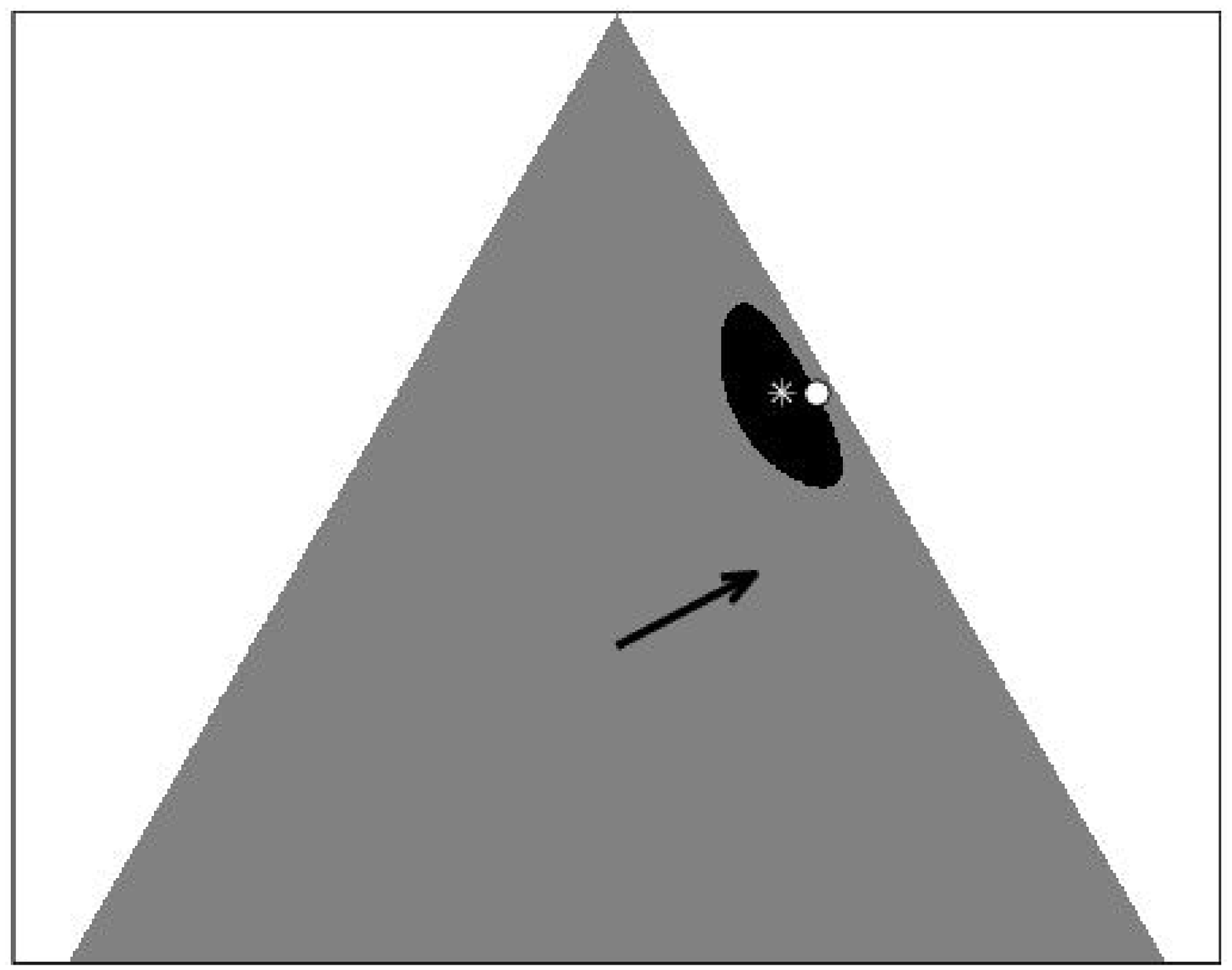}
\caption{The $L^1$ (left) and KL-neighborhoods (right) around the probability vector $p=(0,0.4,0.6)'$ (up) and $p=(0.05,0.35,0.6)'$ (down). The white point is the maximizer of the equations~\eqref{eq:max_kl} and~\eqref{eq:max_1} with $V=(-1,-2,-5)'$ (up) and $V=(-1,0.05,0)'$ (down). We took, $\epsilon=0.05$ (up), $\epsilon=0.02$(down) and $\epsilon'=\sqrt{2\epsilon}$.}
\label{fig:simplexe_0}
\end{figure}
\begin{figure}[hbtp]
  \centering
\includegraphics[width=0.9\textwidth]{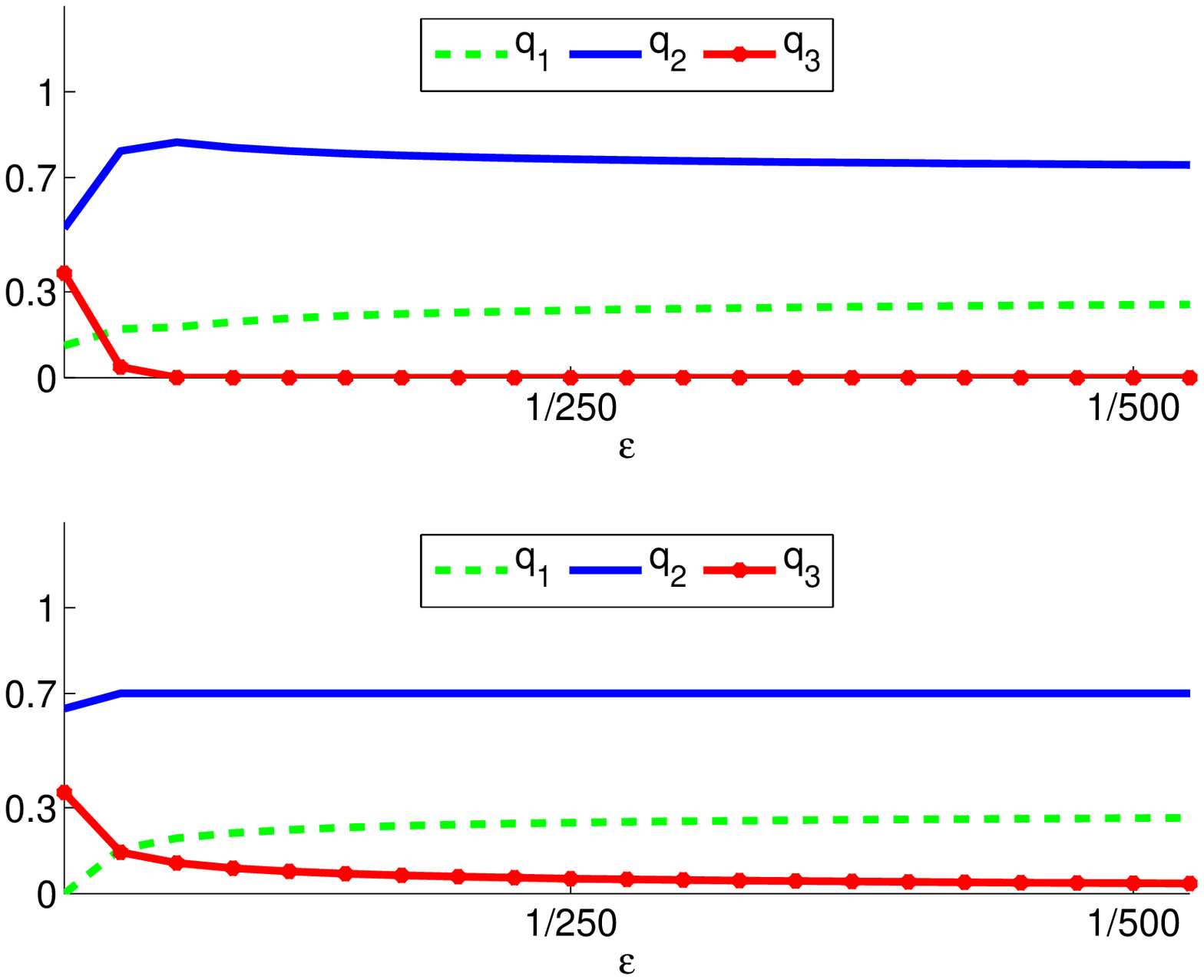}
  \caption{Evolution of the probability vector $q$ that maximizes both \eqref{eq:max_kl} (top) and \eqref{eq:max_1} (bottom) with $p=(0.3,0.7,0)'$, $V=(1,2,3)'$ and $\epsilon$ decreasing from $1/2$ to $1/500$\vspace*{-1cm}}
  \label{fig:evolution}
\end{figure}
\appendix
\section{Linear optimization over a KL-ball}\label{ap:solmax}
This section explains how to solve the optimization problem of Equation~\eqref{eq:max_kl}.
In \cite{NilimElghaoui:05}, a similar problem arises in a different context, and a somewhat different solution is proposed for the case when the $p_i$ are all positive.
As a problem of maximizing a linear function under convex constraints, it is sufficient to consider the Lagrangian function
\begin{align*}
L(q,\lambda, \nu, \mu_1,\dots,\mu_N) &= \sum_{i=1}^Nq_iV_i - \lambda\left(\sum_{i=1}^Np_i\log\frac{p_i}{q_i}-\epsilon\right)\\
&  - \nu\left(\sum_{i=1}^Nq_i-1\right)+\sum_{i=1}^N\mu_iq_i\eqsp.
\end{align*}
If $q$ is a maximizer, there exist $\lambda\in\Rset$, $\nu,\mu_i \geq 0$  $(i=1\dots N)$ such that the following conditions are simultaneously satisfied:\\
\begin{numcases}
   \strut V_i  +\lambda \frac{p_i}{q_i} -\nu+\mu_i = 0\label{eq:condLagpos}\\
    \lambda\left(\sum_{i=1}^Np_i\log\frac{p_i}{q_i}-\epsilon\right) = 0\label{eq:condKL}\\
    \nu\left(\sum_{i=1}^Nq_i-1\right) = 0\label{eq:condsum1}\\
    \mu_iq_i = 0\label{eq:condpos}
\end{numcases}
Let $Z = \{i, p_i=0\}$. Conditions~\eqref{eq:condLagpos} to \eqref{eq:condpos} imply that $\lambda\neq0$ and $\nu\neq0$.
For $i\in\bar{Z}$, Equation \eqref{eq:condLagpos} implies that $q_i = \lambda\frac{p_i}{\nu-\mu_i-V_i}$.
Since $\lambda\neq0$, $q_i>0$ and then, according to \eqref{eq:condpos}, $\mu_i=0$. Therefore,
\begin{equation}
\forall i\in\bar{Z}\eqsp,\quad q_i = \lambda\frac{p_i}{\nu-V_i}\eqsp.\label{eq:qcondLagpos}
\end{equation}
Let $r=\sum_{i\in Z}q_i$. Summing on $i\in\bar{Z}$ and using equations~\eqref{eq:qcondLagpos} and~\eqref{eq:condsum1}, we have
\begin{equation}
 \lambda\sum_{i\in\bar{Z}}\frac{p_i}{\nu-V_i}=\sum_{i\in\bar{Z}}q_i=1-r\eqsp.\label{eq:temp1}
\end{equation}
 Using \eqref{eq:qcondLagpos} and \eqref{eq:temp1}, we can write $\sum_{i\in\bar{Z}}p_i\log\frac{p_i}{q_i} = f(\nu)- \log(1-r)$ where $f$ is defined in \eqref{eq:def_f}.
Then, $q$ satisfies condition \eqref{eq:condKL} if and only if 
$f(\nu) = \epsilon +\log(1-r)\eqsp.\label{eq:condKLeq} $

Consider now the case where $i\in Z$. Let $I^* = Z\cap\argmax_{i}V_i$. 
Note that, for all $i\in Z \setminus I^*$, $q_i=0$. Indeed, otherwise, $\mu_i$ should be zero, and then $\nu=V_i$ according to~\eqref{eq:condLagpos}, which involves a possible negative denominator in~\eqref{eq:qcondLagpos}. 
According to~\eqref{eq:condpos}, for all $i\in I^*$, either $q_i=0$ or $\mu_i =0$. The second case implies that $\nu=V_i$ and $r>0$ which requires that $f(\nu)<\epsilon$ so that~\eqref{eq:condKLeq} can be satisfied with $r>0$. Therefore, 
\begin{itemize}
 \item if $f(V_i)<\epsilon$ for $i\in I^*$, then $\nu=V_i$ and the constant $r$ can be computed solving equation $f(\nu) = \epsilon-\log(1-r)$; the values of $q_i$ for $i\in I^*$ may be chosen in any way such that $\sum_{i\in I^*}q_i=r$;
 \item if for all $i\in I^*$ $f(V_i)\geq\epsilon$, then $r=0$, $q_i=0$ for all $i\in Z$ and $\nu$ is the solution of the equation $f(\nu)=\epsilon$.
\end{itemize}
 Once $\nu$ and $r$ have been determined, the other components of $q$ can be computed according to~\eqref{eq:qcondLagpos}: we have that for $i\in\bar{Z}$, $q_i = \frac{(1-r)\tilde{q}_i}{\sum_{i\in\bar{Z}}\tilde{q}_i}$ where $\tilde{q}_i = \frac{p_i}{\nu-V_i}$.

\section{Properties of the $f$ function}\label{ap:proof_pratic}
In this section, a few properties of function $f$ defined in Equation~\eqref{eq:def_f} are stated, as this function plays a key role in the maximizing procedure of Section~\ref{sec:maxKL}.
\begin{proposition}\label{theo:convexity_andco}
 $f$ is a convex, decreasing mapping from $]\max_{i\in\bar{Z}} V_i;\infty[$ onto $]0;\infty[$.
\end{proposition}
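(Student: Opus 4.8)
The plan is to work on the open interval $(V_{\max},\infty)$, where $V_{\max}=\max_{i\in\bar Z}V_i$ and where every $\nu-V_i$ with $i\in\bar Z$ is positive, so that $f$ is $C^\infty$ there; recall also that $\sum_{i\in\bar Z}p_i=1$ since $p$ is a probability vector supported on $\bar Z$. The computational device I would use throughout is to fix $\nu$ and introduce the positive random variable $Y=1/(\nu-V_I)$, where $I$ is drawn according to $(p_i)_{i\in\bar Z}$, so that $g(\nu):=\sum_{i\in\bar Z}p_i/(\nu-V_i)=\E[Y]$, $h(\nu):=\sum_{i\in\bar Z}p_i/(\nu-V_i)^2=\E[Y^2]$ and $k(\nu):=\sum_{i\in\bar Z}p_i/(\nu-V_i)^3=\E[Y^3]$ are simply the first three moments of $Y$. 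I would also note at the outset that the assertion ``onto $]0;\infty[$'' implicitly requires the $V_i$, $i\in\bar Z$, not to be all equal (otherwise $f\equiv 0$), and I would carry this nondegeneracy assumption along; it is also what makes $f$ \emph{strictly} decreasing.

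For monotonicity I would differentiate directly: $\frac{d}{d\nu}\sum_i p_i\log(\nu-V_i)=g$, and $\frac{d}{d\nu}\log g=g'/g=-h/g$ since $g'=-h$. Hence $f'(\nu)=g-h/g=\big((\E Y)^2-\E[Y^2]\big)/\E[Y]=-\mathrm{Var}(Y)/\E[Y]$, which is $\le 0$, and strictly negative unless $Y$ is degenerate, i.e.\ unless all $V_i$ agree on $\bar Z$. So $f$ is strictly decreasing on $(V_{\max},\infty)$.

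For convexity I would differentiate once more. Using $g'=-h$ and $h'=-2k$,
\[
 f''(\nu)=g'-\frac{h'g-hg'}{g^2}=\frac{2kg-h^2-hg^2}{g^2},
\]
so everything reduces to showing the numerator is nonnegative, i.e.\ $2\,\E[Y^3]\E[Y]\ge\E[Y^2]^2+\E[Y^2](\E[Y])^2$. The idea is to split it as
\begin{multline*}
2\E[Y^3]\E[Y]-\E[Y^2]^2-\E[Y^2](\E[Y])^2 \\
=\big(\E[Y^3]\E[Y]-\E[Y^2]^2\big)+\E[Y]\,\big(\E[Y^3]-\E[Y^2]\E[Y]\big),
\end{multline*}
and bound the two brackets separately: the first is $\ge 0$ by the Cauchy--Schwarz inequality applied to $Y^{3/2}$ and $Y^{1/2}$ (legitimate since $Y>0$), and the second equals $\E[Y]\,\mathrm{Cov}(Y^2,Y)\ge 0$ because $y\mapsto y$ and $y\mapsto y^2$ are both nondecreasing on $(0,\infty)$ (equivalently, $\mathrm{Cov}(Y^2,Y)=\tfrac12\E[(Y_1+Y_2)(Y_1-Y_2)^2]\ge 0$ for i.i.d.\ copies $Y_1,Y_2$). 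I expect this to be the crux of the proof: the naive decomposition $f=\sum_i p_i\log(\nu-V_i)+\log g$ writes $f$ as a concave function plus a convex one (the latter because the $1/(\nu-V_i)$ are log-convex and sums of log-convex functions are log-convex), which is inconclusive, so convexity genuinely rests on the moment inequality above rather than on a term-by-term estimate.

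Finally, for the range, I would compute the two endpoint limits and invoke continuity together with strict monotonicity. As $\nu\to\infty$, $\sum_i p_i\log(\nu-V_i)=\log\nu+o(1)$ and $\log g=-\log\nu+o(1)$, so $f(\nu)\to 0$. As $\nu\downarrow V_{\max}$, setting $t=\nu-V_{\max}$ and $p_S=\sum_{i\in\bar Z:\,V_i=V_{\max}}p_i$, one has $\sum_i p_i\log(\nu-V_i)=p_S\log t+O(1)$ and $\log g=-\log t+O(1)$, whence $f(\nu)=(p_S-1)\log t+O(1)\to+\infty$ because $0<p_S<1$ under the nondegeneracy assumption. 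A continuous, strictly decreasing function on $(V_{\max},\infty)$ with these two limits is a bijection onto $(0,\infty)$, which completes the proof.
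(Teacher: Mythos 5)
Your proof is correct and follows essentially the same route as the paper's: the same probabilistic device (moments of $Y=1/(\nu-V_I)$), the same expression for $f''$, and the same reduction of convexity to the moment inequality $2\E[Y^3]\E[Y]\ge\E[Y^2]^2+\E[Y^2](\E[Y])^2$, which the paper settles with Cauchy--Schwarz plus $\E[Y^2]\ge(\E[Y])^2$ rather than your Cauchy--Schwarz-plus-covariance split. You are somewhat more explicit than the paper on the monotonicity (via $f'=-\mathrm{Var}(Y)/\E[Y]$ rather than an appeal to Jensen), on the endpoint limits establishing surjectivity, and on the degenerate case where all $V_i$ coincide on $\bar Z$ --- all welcome but not a different argument.
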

\begin{proof}
Using Jensen's inequality, it is easily shown that the $f$ function decreases from $+\infty$ to $0$.
The second derivative of $f$ with respect to $\nu$ is equal to
$$-\sum_i\frac{ p_i}{(\nu-V_i)^2}+\frac{2\sum_i\frac{p_i}{(\nu-V_i)^3}\sum_i\frac{p_i}{\nu-V_i}-\left(\sum_i\frac{p_i}{(\nu-V_i)^2}\right)^2}{\left(\sum_i\frac{p_i}{\nu-V_i}\right)^2}.$$
If $Z$ denotes a positive random value such that $\P\left(Z=\frac{1}{\nu-V_i}\right)=p_i$, then 
$$f''(\nu) =\frac{2\E(Z^3)\E(Z)-\E(Z^2)\E(Z)^2-\E(Z^2)^2}{\E(Z)^2}\eqsp.$$
Using Cauchy-Schwartz inequality, we have $\E(Z^2)^2 = \E(Z^{3/2}Z^{1/2})^2\leq \E(Z^3)\E(Z)$. In addition $\E(Z^2)^2\geq \E(Z^2)\E(Z)^2\eqsp.$ These two inequalities show that $f''(\nu)\geq 0$. 
\end{proof}
As mentioned in Section~\ref{sec:maxKL}, Newton's method can be applied to solve the equation $f(\nu)=\epsilon$ for a fixed value of $\epsilon$. When $\epsilon$ is close to $0$, the solution of this equation is quite large and an appropriate initialization accelerates convergence. 
Using a second-order Taylor's-series approximation of the function $f$, it can be seen that, for $\nu$ near $\infty$, $f(\nu)=\frac{\sigma_{p,V}}{2\nu^2}+o(\frac{1}{\nu^2})$, where $\sigma_{p,V}=\sum_{i}p_iV_i^2-(\sum_ip_iV_i)^2$. 
The Newton iterations can thus be initialized by taking $\nu_0=\sqrt{\sigma_{p,V}/(2\epsilon)}$.
\bibliographystyle{abbrv}
\bibliography{biblio}

\end{document}